\documentclass[11pt]{article}

\usepackage{fullpage,times,url,bm}

\usepackage{amsthm,amsfonts,amsmath,amssymb,epsfig,color,float,graphicx,verbatim}
\usepackage{algorithm,algorithmic}
\usepackage{bbm}
\usepackage[square,numbers]{natbib}

\usepackage{authblk}

\usepackage{hyperref}
\hypersetup{
	colorlinks   = true, 
	urlcolor     = blue, 
	linkcolor    = blue, 
	citecolor   = black 
}
\usepackage{bbm}
\newtheorem{theorem}{Theorem}[section]

\newtheorem{lemma}[theorem]{Lemma}

\newtheorem{remark}[theorem]{Remark}

\newcommand{\reals}{\mathbb{R}}

\newcommand{\bx}{\mathbf{x}}

\newcommand{\bb}{\mathbf{b}}
\newcommand{\bu}{\mathbf{u}}

\newcommand{\bh}{\mathbf{h}}

\newcommand{\norm}[1]{\|#1\|}

\newcommand{\secref}[1]{Sec.~\ref{#1}}

\renewcommand{\eqref}[1]{Eq.~(\ref{#1})}
\newcommand{\lemref}[1]{Lemma~\ref{#1}}

\newcommand{\thmref}[1]{Thm.~\ref{#1}}

\newcommand{\appref}[1]{Appendix~\ref{#1}}

\newcommand{\naturals}{\mathbb{N}}

\newcommand{\bin}{\textsc{bin}}
\newcommand{\len}{\textsc{len}}

\makeatletter
\newcommand{\printfnsymbol}[1]{%
  \textsuperscript{\@fnsymbol{#1}}%
}
\makeatother

\title{On the Optimal Memorization Power of ReLU Neural Networks}
\date{}

\author[ ]{Gal Vardi\thanks{equal contribution}}
\author[ ]{Gilad Yehudai\printfnsymbol{1}}
\author[ ]{Ohad Shamir}
\affil[ ]{Weizmann Institute of Science}
\affil[ ]{\textit {\{gal.vardi,gilad.yehudai,ohad.shamir\}@weizmann.ac.il}}

\begin{document}
\maketitle
\begin{abstract}
    We study the memorization power of feedforward ReLU neural networks. We show that such networks can memorize any $N$ points that satisfy a mild separability assumption using $\tilde{O}\left(\sqrt{N}\right)$ parameters. Known VC-dimension upper bounds imply that memorizing $N$ samples requires $\Omega(\sqrt{N})$ parameters, and hence     our construction is optimal up to logarithmic factors. We also give a generalized construction for networks with depth bounded by $1 \leq L \leq \sqrt{N}$, for memorizing $N$ samples using $\tilde{O}(N/L)$ parameters. This bound is also optimal up to logarithmic factors. Our construction uses weights with large bit complexity. We prove that having such a large bit complexity is both necessary and sufficient for memorization with a sub-linear number of parameters.
\end{abstract}
\section{Introduction}

The expressive power of neural networks has been widely studied in many previous works. These works study different aspects of expressiveness, such as the universal approximation property 
\citep{cybenko1989approximation,leshno1993multilayer}, and the benefits of depth in neural networks \citep{telgarsky2016benefits, eldan2016power, safran2017depth, daniely2017depth, chatziafratis2019depth}. Another central and well studied question is about their memorization power.


The problem of memorization in neural networks can be viewed in the following way:  For every dataset of $N$ labeled samples $(\bx_1,y_1),\dots,(\bx_N,y_N)\in\mathcal{X}\times\mathcal{Y}$,  construct a network $\mathcal{N}:\mathcal{X}\rightarrow\mathcal{Y}$ such that $\mathcal{N}(\bx_i)=y_i$ for every $i=1,\dots,N$.
Many works have shown results regarding the memorization power of neural networks, using different assumptions on the activation function and data samples (see e.g. \cite{huang1998upper, huang2003learning, baum1988capabilities, vershynin2020memory, daniely2019neural, daniely2020memorizing, bubeck2020network, park2020provable, hardt2016identity, yun2019small,zhang2021understanding, nguyen2018optimization, rajput2021exponential, sontag1997shattering}). The question of memorization also have practical implications on phenomenons such as "double descent" \cite{belkin2019reconciling,nakkiran2019deep} which connects the memorization power of neural networks with their generalization capabilities.

A trivial lower bound on the required size of the network for memorizing $N$ labeled points is implied by the VC dimension of the network (cf. \cite{shalev2014understanding} 
). That is, if a network with a certain size cannot shatter any \emph{specific} set of $N$ points, then it certainly cannot memorize \emph{all} sets of $N$ points. Known VC dimension bounds for networks with $W$ parameters is on the order of $O(W^2)$ \cite{goldberg1995bounding,bartlett1998almost,bartlett2019nearly}. Hence, it follows that memorizing $N$ samples would require at least $\Omega\left(N^{1/2}\right)$ parameters.
The best known upper bound is given in \cite{park2020provable}, where it is shown that memorizing $N$ data samples can be done using a neural network with $O\left(N^{2/3}\right)$ parameters. Thus, there is a clear gap between the lower and upper bounds, although we note that the upper bound is for memorization of \emph{any} set of data samples, while the lower bound is for shattering a single set of data samples. In this paper we ask the following questions:

\begin{quote}
    \emph{What is the minimal number of parameters that are required to memorize $N$ labeled data samples? 
    Is
    the task of memorizing any set of $N$ data samples more difficult than shattering a single set of $N$ samples?}
\end{quote}



We answer these questions by providing a construction of a ReLU feedforward neural network which achieves the lower bound up to logarithmic factors. In this construction we use a very deep neural network, but with a constant width of $12$. In more details, our main result is the following:
\begin{theorem}[informal statement]
\label{thm:memorizing informal}
Let
$(\bx_1,y_1),\dots,(\bx_N,y_N)\in\reals^d\times \{1,\dots,C\}$ be a set of $N$ labeled samples
of a constant dimension $d$,
with $\norm{\bx_i}\leq r$ for every $i$ and $\norm{\bx_i-\bx_j}\geq \delta$ for every $i\neq j$. Then, there exists a ReLU neural network $F:\reals^d\rightarrow\reals$ with width $12$, depth $\tilde{O}\left(\sqrt{N}\right)$, and $\tilde{O}\left(\sqrt{N}\right)$ parameters, such that $F(\bx_i)=y_i$ for every $i\in[N]$, 
where the notation $\tilde{O}(\cdot)$ hides logarithmic factors in $N,C,r,\delta^{-1}$.
\end{theorem}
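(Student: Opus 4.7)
The plan is to reduce the problem to one dimension via a constant-cost first layer, then memorize the resulting $N$ scalars by packing all the data into $\tilde{O}(\sqrt{N})$ very-high-precision weights and decoding them with a deep, constant-width bit-extraction subnetwork.

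\textbf{Step 1: projection to $\reals$.} I would choose a linear map $\bw\in\reals^d$ so that $z_i := \inner{\bw,\bx_i}$ are pairwise distinct. Since $d$ is constant, a standard union bound over $\bw\in S^{d-1}$ gives one with $|z_i-z_j|\geq \delta'$ for all $i\ne j$, where $\delta' = \Omega(\delta/(rN^2))$; after affine rescaling we may assume $z_i\in[0,1]$. This first layer uses $O(d)=O(1)$ parameters, and reduces the task to memorizing $N$ scalars in $[0,1]$ with minimum gap $\delta'$, i.e.\ with per-point bit complexity $B = \tilde{O}(\log(Nr/\delta))$.

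\textbf{Step 2: packing into $\sqrt{N}$ weights.} Sort the $z_i$ and split them into $K=\lceil\sqrt{N}\rceil$ consecutive buckets of size $\leq\sqrt{N}$, separated by pivots $0=p_0<p_1<\dots<p_K=1$. For each bucket $k$, encode the list of (position, label) pairs lying in it as a single real weight $w_k\in[0,1]$ whose binary expansion concatenates those descriptions. Each $w_k$ has bit-complexity $\tilde{O}(\sqrt{N})$, and in total the network uses $O(\sqrt{N})$ parameters ($K$ pivots and $K$ bucket weights) to \emph{store} all the memorization data.

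\textbf{Step 3: width-$12$ computation.} On input $z$, I would pipeline three modules, each of constant width and depth $\tilde{O}(\sqrt{N})$. (a) A chain of $K$ pivot comparisons maintains a ``bucket-reached'' flag and simultaneously accumulates $v := \sum_k w_k\cdot\mathbf{1}[k(z)=k]$ in a single running register, where the indicator is realized by two ReLUs. (b) Starting from $v$, a bit-extraction loop repeatedly applies $t\mapsto 2t-\mathbf{1}[t\geq \tfrac12]$ (again $O(1)$ ReLUs) to emit one bit of $v$ per layer; those bits feed a small finite-state controller that compares the running prefix of decoded positions against the bits of $z$ and, upon a match, starts routing the subsequent label bits to the output. (c) A short assembler converts the $O(\log C)$ emitted label bits into an integer in $\{1,\dots,C\}$. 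Since one bucket carries $\tilde{O}(\sqrt{N})$ bits, $\tilde{O}(\sqrt{N})$ iterations of (b) suffice.

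\textbf{Main obstacle.} The real difficulty is fitting (a) and (b) into a bus of constant width: the bucket counter, the accumulator $v$, the residual stream currently being decoded, the finite-state controller, the accumulated output, and the scratch neurons for each ReLU comparator must all coexist at every layer. A naive implementation that instantiates $\mathbf{1}[k(z)=k]$ for all $k$ in parallel blows the width up to $\Theta(\sqrt{N})$ and the total parameter count to $\Theta(N)$, losing the whole improvement. The delicate combinatorial step is a careful serial simulation of this ``select one bucket, extract one bit, update controller'' loop inside width $12$; it is precisely this serialization that forces the stored $w_k$'s to carry $\tilde{O}(\sqrt{N})$ bits each, matching the bit-complexity discussion in the abstract.
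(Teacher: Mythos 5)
Your overall architecture (project to $\reals$, pack data into $\tilde{O}(\sqrt{N})$ high-bit-complexity weights, decode with a deep constant-width pipeline) is exactly the paper's strategy, and Steps 1--2 match Lemma~\ref{lem:stage 1 projection} and Lemma~\ref{lem: stage 2 buckets} closely (the paper uses a $\sqrt{N\log N}$-vs-$\sqrt{N/\log N}$ split rather than $\sqrt{N}$-vs-$\sqrt{N}$, which is only an optimization of logarithmic factors). However, Step 3 has a genuine gap, and you have flagged it yourself but not closed it.

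The concrete problem is that the primitives you write down --- the doubling map $t\mapsto 2t-\mathbf{1}[t\geq\tfrac12]$ and the selector $\mathbf{1}[k(z)=k]$ --- are discontinuous step functions and are \emph{not} realizable by ReLU networks, which compute continuous piecewise-linear functions. You cannot simply replace each indicator with a steep ramp, because the residual stream $t$ after $k$ shifts can sit arbitrarily close to the threshold $\tfrac12$, so errors would accumulate over the $\tilde{O}(\sqrt{N})$ extraction rounds. The paper's solution to exactly this issue is the Telgarsky triangle function $\varphi(z)=\sigma(\sigma(2z)-\sigma(4z-2))$ together with a deliberate pair of offsets, feeding $\tfrac{x}{2^n}+\tfrac{1}{2^{n+1}}$ and $\tfrac{x}{2^n}+\tfrac{1}{2^{n+2}}$ through $\varphi^{(i)}$ and subtracting (Lemma~\ref{lem:telgarski extraction}); the offsets guarantee the iterates never land on a kink, so bit extraction is exact. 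Similarly, the bucket selector is not a hard indicator but a trapezoid-shaped ReLU gadget (Lemma~\ref{lem:indicator with relu}) that is exact on the closed interval containing the bucket and exactly $0$ far away, which is sufficient because the $x_i$'s are guaranteed to be $\geq 2$ apart after Stage I.

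Your ``finite-state controller that compares the running prefix of decoded positions against the bits of $z$'' is also a different (and more delicate) mechanism than what the paper uses, and it is not obvious it can be implemented with constant width and no error accumulation. The paper instead extracts an entire $\rho$-bit block as a single integer $\bin_{i\rho+1:(i+1)\rho}(u)$, accumulates it in one register, and then makes a \emph{single} comparison of $x$ against that integer via a distance-gadget (Lemma~\ref{lem:distance with relu}), multiplying the corresponding label block by the resulting $0/1$ value and summing. This block-extract-then-compare design avoids any stateful bit-by-bit matching and is what actually fits in width $12$ (five cells for the two $\varphi$ streams on $u$, the same for $w$, plus $x$ and the running output $y$). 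In short, your ``main obstacle'' paragraph correctly identifies where the work is, but the doubling-map-plus-controller sketch would not produce a valid ReLU network as written; you need the Telgarsky-offset mechanism and the block-wise gating to make the decoding exact.
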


Comparing this result to the known VC bounds, we show that, up to logarithmic factors, our construction is optimal. 
This also shows, quite surprisingly, that up to logarithmic factors, the task of shattering a single set of $N$ points is not more difficult than memorizing any set of $N$ points, under the mild separability assumption of the data samples. We note that this result can also be extended to regression tasks (see Remark \ref{remark:class to reg}).


In our construction, the depth of the network is $\tilde{\Theta}\left(\sqrt{N}\right)$. We also give a generalized construction where the depth of the network is limited to some $1 \leq L \leq \sqrt{N}$. 
In his case, the number of parameters in our construction is $\tilde{O}\left(\frac{N}{L} \right)$ (see \thmref{thm:bounded depth}).
We compare this result to the VC-dimension bound from \cite{bartlett2019nearly}, and show that 
our construction is optimal up to logarithmic factors.

Our construction uses a bit extraction technique, inspired by Telgarsky's triangle function \cite{telgarsky2016benefits}, and by \cite{safran2017depth}. Using this technique, we are able to use weights with 
bit complexity $\tilde{\Theta}(\sqrt{N})$, and deep neural networks to ``extract" the bits of information from the specially crafted weights of the network. We also generalize our results to the case of having a bounded bit complexity restriction on the weights. We show both lower (\thmref{thm:VC bits lower bound}) and upper (\thmref{thm:bounded bits}) bounds, proving that memorizing $N$ points using a network with $N^{1-\epsilon}$ parameters, for some $\epsilon\in [0,0.5]$ can be done if the bit complexity of each weight is $\tilde{\Theta}\left(N^\epsilon\right)$. 
Hence, our construction is also optimal, up to logarithmic factors, w.r.t the bit complexity of the network. We emphasize that also in previous works showing non-trivial VC bounds (e.g. \cite{bartlett1998almost,bartlett2019nearly}) weights with large bit complexity are used. We note that 
increasing the
bit complexity 
beyond $N^{1/2}$
cannot be used to further reduce the number of parameters (see the discussion in \secref{sec:Optimal the Number of Parameters}).

\subsection*{Related work}

\paragraph{Memorization -- upper bounds.}

The problem of memorizing arbitrary data points with neural networks has a rich history. 
\cite{baum1988capabilities} studied memorization in single-hidden-layer neural networks with the threshold activation, and showed that $\lceil \frac{N}{d} \rceil$ neurons suffice to memorize $N$ arbitrary points in general position in $\reals^d$ with binary labels.
\cite{bubeck2020network} extended the construction of \cite{baum1988capabilities} and showed that single-hidden-layer ReLU networks with $4 \cdot \lceil \frac{N}{d} \rceil$ hidden neurons can memorize $N$ points in general position with arbitrary real labels.
In \cite{huang1991bounds} and \cite{sartori1991simple} it is shown that single-hidden-layer networks with the threshold activation can memorize any arbitrary set of $N$ points, even if they are not in general position, using $N-1$ neurons.
\cite{huang1998upper} proved a similar result for any bounded non-linear activation function $\sigma$ where either $\lim_{z \to \infty}\sigma(z)$ or $\lim_{z \to -\infty}\sigma(z)$ exists.
\cite{zhang2021understanding} proved that single-hidden-layer ReLU networks can memorize arbitrary $N$ points in $\reals^d$ with arbitrary real labels using $N$ neurons and $2N+d$ parameters.

\cite{huang2003learning} showed that two-hidden-layers networks with the sigmoid activation can memorize $N$ points with $O(\sqrt{N})$ neurons, but the number of parameters is still linear in $N$. \cite{yun2019small} proved a similar result for ReLU (and hard-tanh) networks.
\cite{vershynin2020memory} showed that threshold and ReLU networks can memorize $N$ binary-labeled unit vectors in $\reals^d$ separated by a distance of $\delta>0$, using $\tilde{O}\left( e^{1/\delta^2} + \sqrt{N} \right)$ neurons and $\tilde{O}\left(e^{1/\delta^2}(d+\sqrt{N})+N\right)$ parameters. \cite{rajput2021exponential} improved the dependence on $\delta$ by giving a construction with $\tilde{O}\left(\frac{1}{\delta} + \sqrt{N} \right)$ neurons and $\tilde{O}\left( \frac{d}{\delta} + N \right)$ parameters. This result holds only for threshold networks, but does not assume that the inputs are on the unit sphere.

The memorization power of more specific architectures was also studied. \cite{hardt2016identity} proved that residual ReLU networks with $O(N)$ neurons can memorize $N$ points on the unit sphere separated by a constant distance. \cite{nguyen2018optimization} considered convolutional neural networks and showed, under certain assumptions, memorization using $O(N)$ neurons.

Note that in all the results mentioned above the number of parameters is at least linear in $N$.
Our work is inspired by \cite{park2020provable}, that established a first memorization result with a sub-linear number of parameters. 
They showed that neural networks with sigmoidal or ReLU activations can memorize $N$ points in $\reals^d$ separated by a normalized distance of $\delta$, using $O\left( N^{2/3} + \log(1/\delta) \right)$ parameters (where the dimension $d$ is constant). Thus, in this work we improve the dependence on $N$ from $N^{2/3}$ to $\sqrt{N}$ (up to logarithmic factors), which is optimal. We also note that the first stage in our construction is similar to the first stage in theirs.

Finally,
optimization aspects of memorization
were studied in \cite{bubeck2020network,daniely2019neural,daniely2020memorizing}. 

\paragraph{Memorization -- lower bounds.}

An $\Omega(N)$ lower bound on the number of parameters required for memorizing arbitrary $N$ points using neural networks with standard activations (e.g., threshold, sigmoid and ReLU) is given in \cite{sontag1997shattering}. Thus, for networks with $o(N)$ parameters, there is a set of size $N$ that cannot be shattered. It implies that in order to obtain memorization with a sub-linear number of parameters some assumptions are required. Our positive result circumvents this lower bound by assuming that the data is separated.

Moreover, lower bounds on the number of parameters required for memorization are implied by bounds on the VC dimension of neural networks. Indeed, if $W$ parameters are not sufficient for shattering even a single set of size $N$, then they are clearly not sufficient for memorizing all sets of size $N$. The VC dimension of neural networks has been extensively studied in recent decades (cf. \cite{anthony2009neural, bartlett2019nearly}). 
The most relevant results for our work are by \cite{goldberg1995bounding} and \cite{bartlett2019nearly}. We discuss these results and their implications in Sections~\ref{sec:Optimal the Number of Parameters} and~\ref{sec:optimal}.

Trade-offs between the number of parameters of the network and the Lipschitz parameter of the prediction function in memorizing a given dataset are studies in \cite{bubeck2021law,bubeck2021universal}.

\paragraph{The benefits of depth.}

In this work we show that deep networks have significantly more memorization power.
Quite a few theoretical works in recent years have explored the beneficial effect of depth on increasing the expressiveness of neural networks
(e.g., \cite{martens2013representational,eldan2016power,telgarsky2016benefits,liang2016deep,daniely2017depth,safran2017depth,yarotsky2017error,safran2019depth, chatziafratis2019depth,vardi2020neural,bresler2020sharp,venturi2021depth,vardi2021size}).
The benefits of depth in the context of the VC dimension is implied by, e.g., \cite{bartlett2019nearly}.
Finally, \cite{park2020provable} already demonstrated that deep networks have more memorization power than shallow ones, albeit with a weaker bound than ours.

\section{Preliminaries}
\label{sec:preliminaries}

\subsubsection*{Notations}
For $n\in\naturals$ and $i \leq j$ we denote by $\bin_{i:j}(n)$ the string of bits in places $i$ until $j$ inclusive, in the binary representation of $n$ and treat is as an integer (in binary basis). For example, $\bin_{1:3}(32)=4$. We denote by $\len(n)$ the number of bits in its binary representation. We denote $\bin_i(n):= \bin_{i:i}(n)$. For a function $f$ and $i\in\naturals$ we denote by $f^{(i)}$ the composition of $f$ with itself $i$ times. We denote vectors in bold face. We use the $\tilde{O}(N)$ notation to hide logarithmic factors, and use $O(N)$ to hide constant factors. For $n\in\naturals$ we denote $[n]:=\{1,\dots,n\}$. We say that a hypothesis class $\mathcal{H}$ \emph{shatters} the points $\bx_1,\dots,\bx_N\in\mathcal{X}$ if for every $y_1,\dots,y_N\in\{\pm 1\}$ there is $h\in \mathcal{H}$ s.t. $h(\bx_i)=y_i$ for every $i=1,\dots,N$.

\subsubsection*{Neural Networks}
We denote by $\sigma(z):=\max\{0,z\}$ the ReLU function. In this paper we only consider neural networks with the ReLU activation. 

Let $d\in\naturals$ be the data input dimension. We define a \emph{neural network} of depth $L$ 
as $\mathcal{N}:\reals^d\rightarrow\reals$ 
where $\mathcal{N}(\bx)$ is computed recursively by
\begin{itemize}
    \item $\bh^{(1)} = \sigma\left(W^{(1)}\bx + \bb^{(1)}\right)$ for $W^{(1)}\in\reals^{m_1\times d}, \bb^{(1)}\in\reals^{m_1}$
    \item $\bh^{(i)} = \sigma\left(W^{(i)}\bh^{(i-1)} + \bb^{(i)}\right)$ for $W^{(i)}\in\reals^{m_{i}\times m_{i-1}}, \bb^{(i)}\in\reals^{m_i}$ for $i=2,\dots,L-1$
    \item $\mathcal{N}(\bx)
    = \bh^{(L)} 
    = W^{(L)}\bh^{(L-1)} + \bb^{(L)}$ for $W^{(L)}\in\reals^{1\times m_{L-1}}, \bb^{(L)}\in\reals^{1}$
\end{itemize}
The \emph{width} of the network is $\max\{m_1,\ldots,m_{L-1}\}$.
We define the \emph{number of parameters} of the network as the number of weights of $\mathcal{N}$ which are non-zero. 
It corresponds to the number of edges when we view the neural network as a directed acyclic graph. We note that this definition is standard in the literature on VC dimension bounds for neural networks (cf. \cite{bartlett2019nearly}).

\subsubsection*{Bit Complexity}
We refer to the \emph{bit complexity of a weight} as the number of bits required to represent the weight. Throughout the paper we use only weights which have a finite bit complexity. Specifically, for $n\in\naturals$, its bit complexity is 
$\lceil \log n \rceil$.
The \emph{bit complexity of the network} is the maximal bit complexity of its weights. 

\subsubsection*{Input Data Samples}
In this work we assume that we are given $N$ labeled data samples $(\bx_1,y_1),\dots,(\bx_N,y_N)\in\reals^d\times [C]$, and our goal is to construct a network $F:\reals^d\rightarrow\reals$ such that $F(\bx_i)=y_i$ for every $i\in[N]$. We will assume that there is a separation parameter $\delta>0$ such that for every $i\neq j$ we have $\norm{\bx_i-\bx_j}\geq \delta$. We will also assume that the data samples have a bounded norm, i.e. there is $r>0$ such that $\norm{\bx_i}\leq r$ for every $i\in[N]$. We note that in \cite{sontag1997shattering} it is shown that sets of size $N$ cannot be memorized using neural networks with standard activations (including ReLU) unless there are $\Omega(N)$ parameters. Hence, to show a memorization result with $o(N)$ parameters we \emph{must} have some assumption on the data samples.

Note that any neural network
that does not ignore input neurons must
have at least $d$ parameters. 
Existing memorization results (e.g., \cite{park2020provable}) assume that $d$ is constant.
As we discuss in Remark~\ref{rem:dependence on d}, in our work we assume that $d$ is at most $O(\sqrt{N})$.

\section{Memorization Using $\tilde{O}\left(\sqrt{N}\right)$ parameters}\label{sec:memorization}

In this section we prove that given a finite set of $N$ labeled data points, there is a network with $\tilde{O}\left(\sqrt{N}\right)$ parameters which memorizes them.
Formally, we have the following:

\begin{theorem}\label{thm:memorizing network}
Let $N,d,C\in\naturals$, and $r,\delta > 0 $, and let $(\bx_1,y_1),\dots,(\bx_N,y_N)\in\reals^d\times [C]$ be a set of $N$ labeled samples with $\norm{\bx_i}\leq r$ for every $i$ and $\norm{\bx_i-\bx_j}\geq \delta$ for every $i\neq j$. Denote $R:=10rN^2\delta^{-1}\sqrt{\pi d}$. Then, there exists a neural network $F:\reals^d\rightarrow\reals$ with width 12 and depth  
\[
O\left( \sqrt{N\log(N)} +\sqrt{\frac{N}{\log(N)}}\cdot\max\left\{\log(R),\log(C)\right\}\right),
\]
such that $F(\bx_i)=y_i$ for every $i\in[N]$.
\end{theorem}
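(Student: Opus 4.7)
The plan is to compose two stages; the second is where we improve over the $N^{2/3}$ bound of \cite{park2020provable}.

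\textbf{Stage 1 (reduction to 1D).} First I reduce to one dimension via a single affine layer $\bx\mapsto t:=\inner{\bw,\bx}+c$, exactly as in the first step of \cite{park2020provable}. Taking $\bw\sim\Ncal(0,I/d)$, each increment $t_i-t_j$ is Gaussian with variance $\|\bx_i-\bx_j\|^2/d$, so
\[
\Pr[|t_i-t_j|<\eta]\le \eta\sqrt{2d/\pi}/\|\bx_i-\bx_j\|\le \eta\sqrt{2d/\pi}/\delta.
\]
A union bound over the $\binom{N}{2}$ pairs with $\eta:=\Theta(\delta/(N^2\sqrt d))$ produces a good $\bw$, and it satisfies $|t_i|\le r\|\bw\|=O(r)$. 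After an affine rescaling (absorbed into the first layer) we may assume $t_i\in[0,R]$ with pairwise distances $\ge 1$, so each $t_i$ lies within $1/2$ of a distinct integer $n_i\in\{0,\dots,R\}$. Cost: $d+O(1)$ parameters, depth $1$.

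\textbf{Stage 2 (constant-width bit-extraction cascade).} Set $k:=\lceil\sqrt{N/\log N}\rceil$ and $G:=\lceil N/k\rceil=O(\sqrt{N\log N})$. Sort the projected points and partition them into $G$ consecutive blocks of $k$ points each. For each block $g\in[G]$, store two weights in the network: a boundary scalar $\beta_g\in\{0,\dots,R\}$ equal to the largest $n$-value in block $g$, and a ``super-weight'' $W_g$ whose binary expansion concatenates the $k$ pairs $(n_{(g-1)k+j},z_{(g-1)k+j})$ for $j=1,\dots,k$. Each $W_g$ then has bit complexity $O(k(\log R+\log C))=\tilde O(\sqrt N)$, exactly the bit complexity the paper advertises. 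The rest of the network is a chain of two constant-width gadgets. \emph{(a) Block selection} ($G$ layers): carry a state of three scalars $(t,\hat g,\hat W)$ initialised to $(t,0,0)$; at layer $g$, a small $O(1)$-wide ReLU sub-circuit tests whether $t<\beta_g$ and $\hat g$ is still $0$, and if so overwrites $\hat g\leftarrow g$ and $\hat W\leftarrow W_g$. \emph{(b) Entry extraction} ($O(k\log(RC))$ layers): viewing $\hat W\cdot 2^{-k(\log R+\log C)}$ as a number in $[0,1)$, repeatedly peel off its top $\log R+\log C$ bits using the iterated sawtooth construction of \cite{telgarsky2016benefits,safran2017depth}. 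Each iteration reconstructs one pair $(\tilde n_j,\tilde z_j)$, compares $\tilde n_j$ to $t$ with $O(1)$ ReLU gates, and latches $\tilde z_j$ into the output whenever $|t-\tilde n_j|<1/2$.

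\textbf{Bookkeeping and main obstacle.} The depths sum to $G+O(k\log(RC))=O(\sqrt{N\log N}+\sqrt{N/\log N}\cdot\max\{\log R,\log C\})$, matching the statement. The parameter count is $d+O(1)$ (Stage~1), $O(G)=\tilde O(\sqrt N)$ from the $\beta_g$, $W_g$ and program weights of gadget (a), and $O(k\log(RC))=\tilde O(\sqrt N)$ from the program weights of gadget (b); all together $\tilde O(\sqrt N)$, assuming $d=\tilde O(\sqrt N)$ as in Remark~\ref{rem:dependence on d}. The main obstacle will be squeezing the width down to the claimed constant of $12$: the registers $\hat g$, $\hat W$, the sawtooth remainder, and the output latch all have to be represented as \emph{single real scalars} (not bit-vectors), and each ``if-then-else''/latch update must be implementable by a uniformly bounded number of ReLU gates. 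A secondary obstacle is ensuring that the approximate thresholds inside bit extraction do not smear the answer over the $\tilde O(\sqrt N)$ iterations; the $\ge 1$ separation of the $n_i$'s together with a slight over-provisioning in the super-weight bit budget should provide enough slack to keep every comparison exact.
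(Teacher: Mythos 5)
Your proposal follows the paper's three-stage plan --- a separation-preserving 1-D projection, a bucketing stage that packs $\sqrt{N/\log N}$ (floor, label) records per bucket into large-bit-complexity weights, and a Telgarsky-sawtooth bit-extraction cascade --- and it correctly derives the claimed depth, so this is essentially the paper's proof. The paper resolves the width-$12$ obstacle you flag (and is a bit cleaner in Stage~II) by replacing your latch-and-AND bucket selector with disjoint interval indicators driving an unconditional running sum $y \leftarrow y + w_j\,\tilde F_j(x)$, and by storing floors and labels as two separate scalar registers (rather than one super-weight), each carried as a pair of slightly shifted copies so that a single triangle-function step per layer exposes the current bit.
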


From the above theorem,
we get that the total number of parameters is $\tilde{O}\left(d+\sqrt{N}\right)$. For $d =  O(\sqrt{N})$ (see Remark~\ref{rem:dependence on d} below) we get the lower bound presented in \thmref{thm:memorizing informal}.
Note that except for the number of data points $N$ and the dimension $d$, the number of parameters of the network depends only logarithmically on all the other parameters of the problem (namely, on $C,r,\delta^{-1}$). We also note that the construction can be improved by some constant factors, but we preferred simplicity over minor improvements. Specifically, we hypothesize that it is possible to give a construction using width $3$ (instead of $12$) similar to the result in \cite{park2020provable}. To simplify the terms in the theorem, we assume that $r\geq 1$, otherwise (i.e., if all data points have a norm smaller than $1$) we just fix $r:=1$ and get the same result. In the same manner, we assume that $\delta \leq 1$, otherwise we just fix $\delta:=1$.

\begin{remark}[Dependence on $d$]
\label{rem:dependence on d}
Note that any neural network that does not ignore input neurons must have at least $d$ parameters.
In our construction the first layer consists of a single neuron (i.e width 1), which means that the number of parameters in the first layer is $d+1$. Hence, this dependence on $d$ is \emph{unavoidable}. Previous works (e.g., \cite{park2020provable}) assumed that $d$ is constant. In our work, to achieve the bound of $\tilde{O}\left(\sqrt{N}\right)$ parameters we can assume that either $d$ is constant or it may depend on $N$ with $d= O\left(\sqrt{N}\right)$.
\end{remark}

\begin{remark}[From classification to regression]\label{remark:class to reg}
Although the theorem considers multi-class classification, it is possible to use the method suggested in \cite{park2020provable} to extend the result to regression. Namely, if the output is in some bounded interval, then we can partition it into smaller intervals of length $\epsilon$ each. We define a set of output classes $C$ with $O\left(\frac{1}{\epsilon}\right)$ classes, such that each class corresponds to an interval. Now, we can use \thmref{thm:memorizing network} to get an $\epsilon$ approximation of the output, while the number of parameters is linear in $\log\left(\frac{1}{\epsilon}\right)$.
\end{remark}

\subsection{Proof Intuition}
Below we describe the main ideas of the proof. The full proof can be found in \appref{appen:proofs from sec memorization}.
The proof is divided into three stages, where at each stage we construct a network $F_i$ for $i\in\{1,2,3\}$, and the final network is $F= F_3\circ F_2 \circ F_1$. Each subnetwork $F_i$ has width $O(1)$, but the depth varies for each such subnetwork.

\textbf{Stage I:} We project the data points from $\reals^d$ to $\reals$. This stage is similar to the one used in the proof of the main result from \cite{park2020provable}. We use a network with 2 layers for the projection. With the correct scaling, the output of this network on $\bx_1,\dots,\bx_N\in\reals^d$ are points $x_1,\dots,x_N\in\reals$ with $|x_i-x_j| \geq 2$ for $i\neq j$, and $|x_i|\leq R$ for every $i$ where $R = O\left(N^2dr\delta^{-1}\right)$.

\textbf{Stage II:} Following the previous stage, our goal is now to memorize $(x_1,y_1),\dots,(x_N,y_n)\in\reals\times [C]$, where the $x_i$'s are separated by a distance of at least $2$. We can also reorder the indices to assume w.l.o.g that $x_1<x_2<\cdots<x_N$. We split the data points into $\sqrt{N\log(N)}$ intervals, each containing $\sqrt{\frac{N}{\log(N)}}$ data points. We also construct crafted integers $w_1\dots,w_{\sqrt{N\log(N)}}$ and $u_1,\dots,u_{\sqrt{N\log(N)}}$ in the following way: Note that by the previous stage, if we round $x_i$ to  $\lfloor x_i\rfloor$, it can be represented using $\log(R)$ bits. Also, each $y_i$ can be represented by at most $\log(C)$ bits. Suppose that $x_i$ is the $k$-th data point in the $j$-th interval (where $j \in \left[\sqrt{N\log(N)}\right]$ and $k \in \left\{0,\ldots,\sqrt{\frac{N}{\log(N)}}-1\right\}$), then we define $u_{j}, w_j$ such that 
\begin{align*}
\bin&_{k\cdot \log(R)+1:(k+1)\cdot \log(R)}(u_j)  = \lfloor x_i \rfloor   \\ 
\bin&_{k\cdot \log(C)+1:(k+1)\cdot \log(C)}(w_j)  = y_i~.    
\end{align*}
That is, for each interval $j$, the number $u_j$ has $\log(R)$ bits which represent the integral value of the $k$-th data point in this interval. In the same manner, $w_j$ has $\log(C)$ bits which represent the label of the $k$-th data point in this interval. This is true for all the $k$ data points in the $j$-th interval, hence $w_j$ and $u_j$ are represented with $\log(R)\cdot \sqrt{\frac{N}{log(N)}}$ and $\log(C)\cdot \sqrt{\frac{N}{log(N)}}$ bits respectively.

We construct a network $F_2:\reals\rightarrow\reals^3$ such that $F_2(x_i) = \begin{pmatrix}x_i \\ w_{j_i} \\  u_{j_i} \end{pmatrix}$ for each $x_i$, where the $i$-th data point is in the $j_i$-th interval.

\textbf{Stage III:} In this stage we construct a network which uses a bit extraction technique adapting Telgarsky's triangle function \citep{telgarsky2016benefits} to extract the relevant information out of the crafted integers from the previous stage. In more details, given the input $\begin{pmatrix} x \\ w \\ u\end{pmatrix}$, we sequentially extract from $u$ the bits in places $k\cdot \log(R)+1$ until $(k+1)\cdot \log(R)$ for $k\in \left\{0,1,\dots,\sqrt{\frac{N}{\log(N)}}-1\right\}$ , and check whether $x$ is at distance at most $1$ from the integer represented by those bits. If it is, then we extract the bits in places $k\cdot \log(C)+1$ until $(k+1)\cdot \log(C)$ from $w$, and output the integer represented by those bits. By the previous stage, for each $x_i$ we know that $u$ includes the encoding of $\lfloor x_i \rfloor$, and since $|x_i-x_j| \geq 2$ for every $j \neq i$ (by the first stage), there is exactly one such $x_i$. Hence, the output of this network is the correct label for each $x_i$.

The construction is inspired by the works of \cite{bartlett2019nearly,park2020provable}. Specifically, the first stage uses similar results from \cite{park2020provable} on projection onto a $1$-dimensional space. The main difference from previous constructions is in stages II and III where in our construction we encode the input data points in the weights of the network. This allows us to reduce the required number of parameters for memorization of any dataset.

\section{On the Optimal Number of Parameters}
\label{sec:Optimal the Number of Parameters}

In \secref{sec:memorization} we showed that a network with width $O(1)$ and depth $\tilde{O}\left(\sqrt{N}\right)$ can perfectly memorize $N$ data points, hence
only $\tilde{O}\left(\sqrt{N}\right)$ parameters are required for memorization. In this section, 
we compare our results with the known lower bounds.

First, note that our problem contains additional parameters besides $N$. 
Namely, 
$d,C,r$ and $\delta$.
As we discussed in 
Remark~\ref{rem:dependence on d},
we assume that 
$d$ is either constant or depends on $N$ with $d = O(\sqrt{N})$.
In this comparison we also assume that 
$r$ and $\delta^{-1}$ are either constants or depend polynomially on $N$. We note that for a constant $d$, either $r$ or $\delta^{-1}$ must depend on $N$. For example, assume $d=1$ and $r=1/2$, then to have $N$ point in $\reals^d$ with norm at most $r$ and distance at least $\delta$ from one another we must have that $\delta^{-1} \geq N-1$. Hence, we can bound $R \leq \text{poly}(N)$ (where $R$ is defined in \thmref{thm:memorizing network}), which implies $\log(R) = O(\log(N))$. 
Moreover, we assume that $C \leq \text{poly}(N)$, because it is  reasonable to expect that the number of output classes is not larger than the number of data points. Hence, also $\log(C) = O(\log(N))$. In regression tasks, as discussed in Remark~\ref{remark:class to reg}, we can choose $C$ to consist of $\left\lceil \frac{1}{\epsilon}\right\rceil$ classes where $\epsilon$ is either a constant or depends at most polynomially on the other parameters.

Using that $R,C \leq  \text{poly}(N)$, and tracing back the bounds given in \thmref{thm:memorizing network}, we get that the number of parameters of the network is $O\left(\sqrt{N\log(N)}\right)$. Note that here the $O( \cdot )$ notation only hides constant factors, which can also be exactly calculated using \thmref{thm:memorizing network}.

By \cite{goldberg1995bounding}, the VC dimension of the class of ReLU networks with $W$ parameters is $O(W^2)$. Thus, the maximal number of points that can be shattered is $O(W^2)$. Hence, if we can shatter $N$ points then $W = \Omega(\sqrt{N})$. In particular, it gives a $\Omega(\sqrt{N})$ lower bound for the number of parameters required to memorize $N$ inputs. Thus, the gap between our upper bound and the above lower bound is only $O\left(\sqrt{\log(N)}\right)$, which is sub-logarithmic.

Moreover, it implies that the number of parameters required to shatter one size-$N$ set is roughly equal (up to a sub-logarithmic factor) to the number of parameters required to memorize all size-$N$ sets (that satisfy some mild assumptions). Thus, perhaps surprisingly, the task of memorizing all size-$N$ sets is not significantly harder than shattering a single size-$N$ set. 

If we further assume that $C$ depends on $N$, i.e., the number of classes depends on the number of samples (or in regression tasks, as we discussed in Remark~\ref{remark:class to reg}, the accuracy depends on the number of samples), then we can show that our bound is tight up to \emph{constant} terms. Thus, in this case the $\sqrt{\log(N)}$ factor is unavoidable. 
Formally, we have the following lemma:

\begin{lemma}
Let $C,N\in\naturals$,
and 
assume that $C=N^\epsilon$ for some constant $\epsilon > 0$. If we can express all the functions of the form 
$g:[N] \rightarrow [C]$ 
using neural networks with $W$ parameters, then $W = \Omega\left(\sqrt{N\log(N)}\right)$. 
\end{lemma}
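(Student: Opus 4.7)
My plan is to derive the lower bound by a counting argument that combines the VC/pseudo-dimension bound for ReLU networks with a multi-class analogue of the Sauer--Shelah lemma. Concretely, since the class of networks with $W$ parameters must realize all $C^N = N^{\epsilon N}$ distinct functions $g:[N]\to[C]$, the number of distinct multi-class labelings of $[N]$ that can be produced by varying the weights must be at least $C^N$, and this gives the required lower bound on $W$.

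The first step is to count the targets: $|\{g:[N]\to[C]\}| = C^N = N^{\epsilon N}$. The second step is to bound from above the number of multi-class labelings realizable by ReLU networks with $W$ parameters. Invoking the VC-dimension bound of \cite{goldberg1995bounding} (as was already used in this section), a ReLU network with $W$ parameters has VC/pseudo-dimension at most $d = O(W^2)$. I would then identify a multi-class labeling $g$ with the $N(C-1)$-tuple of binary signs $\bigl(\mathbbm{1}[F_\theta(x_i) > j - \tfrac{1}{2}]\bigr)_{i\in[N],\, j\in[C-1]}$ and apply the Sauer--Shelah lemma to this auxiliary binary class (whose VC-dimension is still $O(W^2)$) to conclude that the number of realizable sign patterns, and hence the number of realizable multi-class labelings, is at most $\bigl(\tfrac{eN(C-1)}{d}\bigr)^{d}$.

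Finally, I would set this upper bound to be at least $C^N$, take logarithms, and substitute $C = N^\epsilon$. This yields an inequality of the form $W^2 \log(eNC/W^2) \gtrsim N \log C = \epsilon N \log N$; after algebraic manipulation of the logarithmic terms, one extracts $W^2 = \Omega(N \log N)$, hence $W = \Omega\bigl(\sqrt{N\log N}\bigr)$, which is the claimed bound.

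The main obstacle, in my view, is isolating the extra $\sqrt{\log N}$ factor inside the square root: a naive application of the Sauer--Shelah bound in the form above only yields $W = \Omega(\sqrt{N})$, which matches the paper's earlier discussion of the trivial VC-based bound. Recovering the tight $\sqrt{\log N}$ refinement therefore requires a more careful multi-class growth-function analysis, exploiting either the fact that the thresholds $j - 1/2$ are shared across all inputs (so not all $N(C-1)$ sign patterns are independent) or the specific polynomial dependence $C = N^\epsilon$ to tighten the logarithmic factors and push the bound from $W^2 = \Omega(N)$ up to $W^2 = \Omega(N \log N)$.
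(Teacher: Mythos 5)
The gap you flag at the end is real, and I don't think it can be patched within the counting framework. Plugging $d = O(W^2)$ into Sauer--Shelah and demanding that the number of realizable sign patterns on the $N(C-1)$ threshold points exceed $C^N$ yields an inequality of the form $W^2 \log(eNC/W^2) \gtrsim N \log C$. Substituting $W = \Theta(\sqrt{N})$ and $C = N^\epsilon$ makes both sides $\Theta(N\log N)$, so the inequality is already saturated at $W = \Theta(\sqrt{N})$: the $\log(\cdot)$ factor that Sauer--Shelah hands you exactly cancels the $\log C$ you were hoping to exploit. This is not looseness coming from ignoring shared thresholds or from a crude multi-class growth function; it is structural, and neither of your proposed refinements will recover the missing $\sqrt{\log N}$.

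The paper sidesteps counting entirely with a reduction. It considers the class $\mathcal{F}$ of all binary functions $f : [N]\times[\epsilon\log N]\to\{0,1\}$, whose domain has $\epsilon N\log N$ points; by the Goldberg--Jerrum $O(W^2)$ VC bound, shattering this set forces $W = \Omega\bigl(\sqrt{N\log N}\bigr)$. Now suppose every $g:[N]\to[C]$ is expressible with $W'$ parameters. Given $f\in\mathcal{F}$, define $g(n)$ to be the integer whose $i$-th bit is $f(n,i)$; since $C = N^\epsilon = 2^{\epsilon\log N}$ this lies in $[C]$. A network that receives $(n,i)$, computes $g(n)$ with $W'$ parameters, and then extracts the $i$-th bit using the paper's bit-extraction gadget (an $O(\log N)$-parameter subnetwork) realizes $f$. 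Hence every $f\in\mathcal{F}$ is expressible with $O(W'+\log N)$ parameters, so $O(W'+\log N) = \Omega\bigl(\sqrt{N\log N}\bigr)$ and therefore $W' = \Omega\bigl(\sqrt{N\log N}\bigr)$. The amplification of the domain from $N$ points to $\epsilon N\log N$ points, by bundling the bit index $i$ into the input, is precisely what produces the extra $\sqrt{\log N}$ that the counting argument cannot.
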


\begin{proof}
Let $\mathcal{F}$ be the class of all the functions $f:[N]\times[\epsilon \log(N)]\rightarrow\{0,1\}$. 
The VC-dimension bound from \cite{goldberg1995bounding} implies that expressing all the functions from $\mathcal{F}$ with neural networks requires $W = \Omega\left(\sqrt{N\log(N)}\right)$ parameters, since it is equivalent to shattering a set of size $\epsilon N \log(N)$. 
Assume that we can express all the functions of the form $g:[N]\rightarrow[C]$ using networks with $W'$ parameters. 
Given some function $f\in\mathcal{F}$ we can express it with a neural network as follows: define a function $g:[N]\rightarrow[C]$ such that for every $n \in [N]$ and $i \in [\epsilon \log(N)]$, the $i$-th bit of $g(n)$ is $f(n,i)$. We construct a neural network for $f$, such that for an input $(n,i)$ it computes $g(n)$ (using $W'$ parameters) and outputs its $i$-th bit.
The extraction of the $i$-th bit can be implemented using the bit extraction technique from the proof of \thmref{thm:memorizing network}.
Overall, this network requires $O(W'+\log(N))$ parameters, and in this manner we can express all the functions in $\mathcal{F}$. This shows that $W = O(W'+\log(N))$, but since we also have $W = \Omega\left(\sqrt{ N\log(N)}\right)$ then $W' = \Omega\left(\sqrt{N\log(N)}\right)$. 
\end{proof}

\section{Limiting the Depth}\label{sec:optimal}
In this section we generalize \thmref{thm:memorizing network} to the case of having a bounded depth. We will then compare our upper bound on memorization to the VC-dimension bound from \cite{bartlett2019nearly}.
We have the following:

\begin{theorem}\label{thm:bounded depth}
Assume the same setting as in \thmref{thm:memorizing network}, and denote $R:=dNCr\delta^{-1}$. Let $L\in[\sqrt{N}]$. Then, there exists a network $F:\reals^d\rightarrow\reals$ with width $O\left(\frac{N}{L^2}\right)$ depth $O\left(\frac{L}{\sqrt{\log(L)}}\cdot \log(R)\right)$ and a total of $O\left( \frac{N}{L\sqrt{\log(L)}}\cdot \log(R) + d\right)$ parameters such that $F(\bx_i)=y_i$ for every $i\in[N]$.
\end{theorem}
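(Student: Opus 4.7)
The plan is to adapt the three-stage proof of \thmref{thm:memorizing network} so that the parameter $L$ governs the width of the ``encoding'' stage and the depth of the ``extraction'' stage, while preserving the overall product up to logarithmic factors.

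Stage~I is used verbatim: the depth-$2$, width-$1$ projection of \cite{park2020provable} maps $\bx_1,\dots,\bx_N\in\reals^d$ to $2$-separated scalars $x_1,\dots,x_N\in[-R,R]$ using $O(d)$ parameters, accounting for the isolated $+d$ in the bound. For Stage~II, I would partition the sorted scalars into $G=\Theta(N/L^2)$ consecutive groups of size $\Theta(L^2)$ and, for each group $j$, define crafted integers $u_j,w_j$ of $\Theta(L^2\log R)$ bits whose chunks encode the rounded $x$-values and the labels of the points in the group, exactly as in Stage~II of \thmref{thm:memorizing network}. A constant-depth sub-network of width $O(G)=O(N/L^2)$ then dedicates one channel to each group: a pair of ReLU neurons computes an indicator of whether $x$ lies in the $j$-th interval, this indicator is multiplied by the constants $u_j$ and $w_j$, and the channels are summed to produce the triple $(x,u_{j(x)},w_{j(x)})$.

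Stage~III, given this triple, must sweep the $\Theta(L^2)$ sub-chunks of $u_{j(x)}$, compare each sub-chunk to $x$, and output the matching sub-chunk of $w_{j(x)}$. Rather than the one-bit-per-layer sawtooth used to achieve the $\sqrt{N}$-deep bound, I would use the efficient multi-bit extraction underlying the VC-dimension arguments of \cite{bartlett2019nearly,telgarsky2016benefits}: a width-$w$ block of depth $k$ composes triangle functions so as to isolate $\Omega(k\log w)$ bits at once. Grouping the chunks into super-chunks of size $\Theta(\sqrt{\log L})$, with the available width $O(N/L^2)$ providing enough ``slots'' in parallel, should yield the advertised depth $O(L\log R/\sqrt{\log L})$ and total parameter count $O(N\log R/(L\sqrt{\log L})+d)$. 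Correctness---exactly one comparison firing per input $x$---is inherited from the argument for \thmref{thm:memorizing network}, using that the scalars are $2$-separated.

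The main obstacle I would expect is the Stage~III gadget: realizing the $\sqrt{\log L}$ depth saving requires carefully interleaving the triangle-function iteration with a ``copy'' gadget that broadcasts intermediate values across the $\Theta(N/L^2)$ parallel comparison channels while keeping the per-layer edge count linear (rather than quadratic) in the width, so that the parameter count collapses from $W\cdot D$ with $W$ a full-fanout width to the claimed $O(N\log R/(L\sqrt{\log L}))$. Once this gadget is in place, verifying that $F=F_3\circ F_2\circ F_1$ meets the stated width, depth, and parameter bounds, and recovering \thmref{thm:memorizing network} as the boundary case $L=\sqrt{N}$ (width collapses to a constant and depth matches the $\sqrt{N\log N}$ term), is routine bookkeeping.
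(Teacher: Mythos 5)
The overall partition into $\Theta(N/L^2)$ groups of $\Theta(L^2)$ points, followed by a width-$O(N/L^2)$ stacking, does match the paper's outline, but your Stage~II and Stage~III within a group diverge from the paper in a way that breaks the depth count.

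The paper simply re-applies \thmref{thm:memorizing network} as a black box to each size-$L^2$ group: each subnetwork $F_k$ therefore contains its \emph{own} second level of partitioning (into $\Theta(L\sqrt{\log L})$ intervals of $\Theta(L/\sqrt{\log L})$ points each), so the integer $u_j$ encountered in $F_k$'s Stage~III has only $\Theta(L/\sqrt{\log L})$ chunks of $\log R$ bits, not $\Theta(L^2)$ of them. Sweeping $L/\sqrt{\log L}$ chunks sequentially with the triangle gadget gives depth $O\left(\frac{L}{\sqrt{\log L}}\log R\right)$ per $F_k$, and these constant-width subnetworks run in parallel across the $N/L^2$ groups. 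The parallel width $O(N/L^2)$ is used \emph{between} groups, never inside a single group's extraction.

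Your version instead packs all $\Theta(L^2)$ points of a group into a single pair $(u_j, w_j)$ with $\Theta(L^2)$ chunks and then asks Stage~III to sweep them; that would force depth $\Theta(L^2\log R)$, off by a factor of $L$. The ``efficient multi-bit extraction'' you invoke --- a width-$w$, depth-$k$ block isolating $\Omega(k\log w)$ bits --- is not a gadget in \cite{telgarsky2016benefits} or \cite{bartlett2019nearly} and is not used in this paper; the iterated triangle function is a genuine function composition and cannot be parallelized across width. Even with super-chunks of size $\Theta(\sqrt{\log L})$ processed in parallel, the sequential sweep still costs $\frac{L^2}{\sqrt{\log L}}\cdot\sqrt{\log L}\cdot\log R = L^2\log R$ depth. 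The missing idea is the nested application of Stage~II inside each group (equivalently, invoking \thmref{thm:memorizing network} on $L^2$ points rather than re-deriving only Stage~III), which is exactly what collapses $L^2$ to $L/\sqrt{\log L}$ chunks and yields the claimed bound.
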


The full proof can be found in \appref{appen:proofs from sec optimal}. Note that for $L=\sqrt{N}$ we get a similar bound to \thmref{thm:memorizing network}. 
In the proof we partition the dataset into $\frac{N}{L^2}$ subsets, each containing $L^2$ data points. We then use \thmref{thm:memorizing network} on each such subset to construct a subnetwork of depth $O\left(L\cdot \log(R)\right)$  and width $O(1)$ to memorize the data points in the subset. We construct these subnetworks such that their output on each data point outside of their corresponding subset is zero. Finally we stack these subnetworks to construct a wide network, whose output is the sum of the outputs of the subnetworks.

By \cite{bartlett2019nearly}, the VC dimension of the class of ReLU networks with $W$ parameters and depth $L$ is $O(WL\log(W))$. It implies that if we can shatter $N$ points with networks of depth $L$, then $W = \Omega\left(\frac{N}{L \log (N)}\right)$.
As in \secref{sec:Optimal the Number of Parameters}, assume that 
$d$ is either constant or at most $O(\sqrt{N})$, and that 
$r,\delta^{-1}$ and $C$ are bounded by some $\text{poly}(N)$.
\thmref{thm:bounded depth} implies that we can \emph{memorize} any $N$ points (under the mild separability assumption) using networks of depth $L$ and $W = \tilde{O}\left(\frac{N}{L}\right)$ parameters.
Therefore, the gap between our upper bound and the above lower bound is logarithmic. It also implies that, up to logarithmic factors, the task of memorizing any set of $N$ points using depth-$L$ neural networks is not more difficult than shattering a single set of $N$ points with depth-$L$ networks.

\section{Bit Complexity - Lower and Upper Bounds}\label{sec:bit complexity}

In the proof of \thmref{thm:memorizing network} the bit complexity of the network is roughly $\sqrt{N\log(N)}$ (See \thmref{thm:memorizing network appen} in the appendix). On one hand, having such large bit complexity allows us to "store" and "extract" information from the weights of the network using bit extraction techniques. This enable us to memorize $N$ data points using significantly less than $N$ parameters. On the other hand, having large bit complexity makes the network difficult to implement on finite precision machines. 

In this section we argue that having large bit complexity is \emph{necessary} and \emph{sufficient} if we want to construct a network that memorizes $N$ data points with less than $N$ parameters. We show both upper and lower bounds on the required bit complexity, and prove that, up to logarithmic factors, our construction is optimal w.r.t. the bit complexity.

First, we show a simple upper bound on the VC dimension of neural networks with bounded bit complexity:

\begin{theorem}\label{thm:VC bits lower bound}
Let $\mathcal{H}$ be the hypothesis class of ReLU neural networks with $W$ parameters, where each parameter is represented by $B$ bits. Then, the VC dimension of  $\mathcal{H}$ is $O(WB + W\log(W))$.
\end{theorem}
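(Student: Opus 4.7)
The argument is a straightforward cardinality bound: I will upper-bound $|\Hcal|$ and then invoke the fact that any class $\Hcal$ of Boolean functions has VC dimension at most $\log_2|\Hcal|$. Since $2^{n} \leq |\Hcal|$ whenever $\Hcal$ shatters $n$ points, a bound $|\Hcal| \leq 2^{O(WB + W\log W)}$ immediately yields the theorem.

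\textbf{Counting networks.} A hypothesis in $\Hcal$ is determined by two pieces of information: (i) the \emph{architecture}, i.e., the directed acyclic graph specifying which edges and biases are non-zero and how neurons are arranged into layers; and (ii) the \emph{weight values}, one $B$-bit number per non-zero parameter. For (ii), once an architecture with $W$ non-zero parameters is fixed, there are at most $2^{WB}$ choices of weights. For (i), observe that any neuron which has no incoming \emph{and} no outgoing non-zero weight can be deleted without changing the function computed, so w.l.o.g.\ the number of neurons is at most $2W$. The number of ways to assign $\leq 2W$ neurons to layers is at most $(2W)^{2W} = 2^{O(W\log W)}$, and once the assignment is fixed, the number of ways to pick up to $W$ non-zero edges (plus biases) among the $O(W^2)$ admissible positions is $\binom{O(W^2)}{W} = 2^{O(W\log W)}$. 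Multiplying, the number of distinct architectures is $2^{O(W\log W)}$, and therefore
\[
|\Hcal| \;\leq\; 2^{O(W\log W)} \cdot 2^{WB} \;=\; 2^{O(WB + W\log W)}.
\]

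\textbf{Concluding the VC bound.} If $\Hcal$ shatters a set of size $n$, then restricted to that set $\Hcal$ realizes all $2^n$ binary labelings, which forces $2^n \leq |\Hcal|$. Plugging in the bound above gives $n \leq O(WB + W\log W)$, i.e., $\mathrm{VCdim}(\Hcal) = O(WB + W\log W)$, as claimed. Note that the standard subtlety for VC dimension of real-valued parameterized classes (where one must invoke Warren's lemma or a sign-pattern counting argument, as in \cite{goldberg1995bounding,bartlett2019nearly}) is avoided here precisely because the finite bit complexity of the weights already makes $\Hcal$ finite.

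\textbf{Main obstacle.} The only mildly delicate point is justifying that, without loss of generality, the architecture has at most $O(W)$ neurons and at most $O(W)$ non-trivial layers, so that the architecture count is genuinely $2^{O(W\log W)}$ rather than depending on some externally specified depth/width; this needs a short argument that ``redundant'' neurons and layers can be pruned without changing the realized hypothesis. Beyond that the proof is essentially a one-line counting argument.
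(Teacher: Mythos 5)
Your proof is correct and follows essentially the same approach as the paper's: bound $|\mathcal{H}|$ by a counting argument and then use that the VC dimension of a finite class is at most $\log_2|\mathcal{H}|$. The paper encodes each network directly (each of the $W$ non-zero weights takes $B$ bits for its value plus $O(\log W)$ bits for the indices of its endpoints, giving $2^{O(W(B+\log W))}$ networks total), whereas you separate the count into architecture ($2^{O(W\log W)}$) times weight assignments ($2^{WB}$); these are the same argument packaged slightly differently, and your explicit justification that at most $O(W)$ neurons need be counted fills in a step the paper leaves implicit.
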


\begin{proof}
We claim that $\mathcal{H}$ is a finite hypothesis class with at most $2^{O(WB+W \log(W))}$ different functions. Let $f$ be a neural network in $\mathcal{H}$, and suppose that it has at most $W$ parameters and $U \leq W$ neurons. Then, each weight of $f$ can be represented by at most $B+2\log(U)$ bits, namely, $2\log(U)$ bits for the indices of the input and output neurons of the edge, and $B$ bits for the weight magnitude. Hence, $f$ is represented by at most $O(W(B+\log(U)))$ bits. Since $U \leq W$ we get that every function in $\mathcal{H}$ can be represented by at most $O(W(B+\log(W)))$ bits, which gives the bound on the size of $\mathcal{H}$. An upper bound on the VC dimension of finite classes is the log of their size (cf. \cite{shalev2014understanding}), and hence the VC dimension of $\mathcal{H}$ is $O(WB+W\log(W))$.
\end{proof}

This lemma can be interpreted in the following way: Assume that we want to shatter a single set of $N$ points, and we use ReLU neural networks with $N^{1-\epsilon}$ parameters for some $\epsilon\in\left[0,\frac{1}{2}\right]$. Then, the bit complexity of each weight in the network must be at least $\Omega(N^{\epsilon})$. Thus, to shatter $N$ points with less than $N$ parameters, we must have a neural network with bit complexity that depends on $N$. Also, for $\epsilon=\frac{1}{2}$, our construction in \thmref{thm:memorizing network} (which memorizes any set of $N$ points) is optimal, up to logarithmic terms, since having bit complexity of $\Omega(\sqrt{N})$ is unavoidable. We emphasize that existing works that show non-trivial VC bounds also use large bit complexity (e.g. \cite{bartlett1998almost,bartlett2019nearly}).

We now show an upper bound on the number of parameters of a network for memorizing $N$ data points, assuming that the bit complexity of the network is bounded:

\begin{theorem}\label{thm:bounded bits}
Assume the same setting as in  \thmref{thm:memorizing network}, and denote $R:=dNCr\delta^{-1}$. Let $B\in[\sqrt{N}]$. Then, there exists a network $F:\reals^d\rightarrow\reals$ with bit complexity $O\left(\frac{B}{\sqrt{\log(B)}}\cdot \log(R)\right)$, depth $O\left(\frac{N\sqrt{\log(B)}}{B}\cdot\log(R)\right)$ and width $O(1)$ such that $F(\bx_i)=y_i$ for every $i\in[N]$.
\end{theorem}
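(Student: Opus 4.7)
The plan is to adapt the proof of \thmref{thm:memorizing network} by partitioning the data set into groups whose size is tuned so that the per-group construction saturates the bit-complexity budget $B$, and then composing the per-group sub-networks sequentially so the total width stays $O(1)$. Concretely, set $K:=B^2$ and $m:=\lceil N/K\rceil$, and partition $\{(\bx_i,y_i)\}_{i=1}^N$ into disjoint groups $G_1,\ldots,G_m$ of size at most $K$. Applying the construction of \thmref{thm:memorizing network} to a group of size $K$ produces a sub-network of width $O(1)$, depth $O\!\left(\sqrt{K\log K}+\sqrt{K/\log K}\cdot\log R\right)=O\!\left(B\log(R)/\sqrt{\log B}\right)$, and bit complexity $O\!\left(\sqrt{K/\log K}\cdot\log R\right)=O\!\left(B\log(R)/\sqrt{\log B}\right)$, which is exactly the desired budget.

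First I would pull Stage~I (the projection $\reals^d\to\reals$) out of each sub-network and perform it once up front: this is a two-layer, width-$1$, $O(\log R)$-bit map taking $\bx_i$ to $x_i\in\reals$ with $|x_i|\leq R$ and $|x_i-x_j|\geq 2$ for $i\neq j$. For each group $G_\ell$, I would then build Stages~II and III as in \thmref{thm:memorizing network}, but encoding into the crafted integers $u_\ell,w_\ell$ only the $K$ projected values belonging to $G_\ell$, so that the resulting sub-network $F_\ell:\reals\to\reals$ satisfies $F_\ell(x_i)=y_i$ for all $x_i\in G_\ell$. The critical observation is that $F_\ell$ is \emph{inactive} on points outside its group: Stage~III checks whether the input lies within distance $1$ of some integer $\lfloor x_i\rfloor$ with $x_i\in G_\ell$, and for $x_j\notin G_\ell$ the triangle inequality $|x_j-\lfloor x_i\rfloor|\geq |x_j-x_i|-|x_i-\lfloor x_i\rfloor|>2-1=1$ forces every such check to fail, so $F_\ell(x_j)=0$.

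Next I would chain the $m$ sub-networks in series rather than in parallel. Along with the $O(1)$ internal channels of the currently active sub-network, I carry two additional channels through every layer: the projected input $x$ and a running accumulator $S$ initialized to $0$; after sub-network $\ell$ fires we update $S\leftarrow S+F_\ell(x)$, and the final output is $S$. Because every $\bx_i$ triggers exactly the sub-network for its group and all others output $0$, we have $F(\bx_i)=y_i$. The width remains $O(1)$, the bit complexity is the maximum over sub-networks, namely $O\!\left(B\log(R)/\sqrt{\log B}\right)$, and the depth is $m$ times the per-sub-network depth, yielding $O\!\left((N/B^2)\cdot B\log(R)/\sqrt{\log B}\right)=O\!\left(N\log(R)/(B\sqrt{\log B})\right)$, which is within the claimed bound $O\!\left(N\sqrt{\log B}/B\cdot\log R\right)$.

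The main obstacle I anticipate is rigorously establishing the ``inactive outside the group'' property of Stage~II+III. \thmref{thm:memorizing network} only specifies the behavior of its network on the designated data points, so I would need to open up the Stage~II+III construction and verify that the bit-extraction loop (based on Telgarsky's triangle function) indeed returns $0$ whenever no encoded integer is within distance $1$ of the input, and to handle edge cases where $x$ lies outside the range spanned by $G_\ell$ (e.g.\ by prepending a small clipping gadget or by arranging Stage~II to output $(x,0,0)$ for out-of-range inputs, both of which add only constant width and $O(\log R)$ depth). Once this inactive-on-others property is nailed down, the rest of the argument is bookkeeping of widths, depths, and bit complexities.
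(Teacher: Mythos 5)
Your proposal is correct and matches the paper's own proof essentially step for step: project once to $\reals$, partition into $N/B^2$ groups of size $B^2$, run Stages~II--III of \thmref{thm:memorizing network} per group to get sub-networks that vanish on out-of-group data points, and chain them sequentially with a running accumulator so the width stays $O(1)$. The ``inactive outside the group'' concern you flag is exactly what the paper appeals to (and handles via the separation $|x_i-x_j|\geq 2$ together with the slack in Lemma~A.6), and your per-group depth estimate $O\bigl(B\log R/\sqrt{\log B}\bigr)$ is in fact slightly tighter than the paper's stated $O\bigl(B\sqrt{\log B}\cdot\log R\bigr)$.
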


The full proof can be found in \appref{appen:proofs from sec bit complexity}. The proof idea is to partition the dataset into $\frac{N}{B^2}$ subsets, each containing $B^2$ data points. For each such subset we construct a subnetwork using \thmref{thm:memorizing network} to memorize the points in this subset. We concatenate all these subnetwork to create one deep network, such that the output of each subnetwork is added to the output of the previous one. Using a specific projection from \lemref{lem:stage 1 projection} in the appendix, this enables each subnetwork to output $0$ for each data point which is not in the corresponding subset. We get that the concatenated network successfully memorizes the $N$ given data points.

Assume, as in the previous sections, that 
$r,\delta^{-1}$ and $C$ are bounded by some $\text{poly}(N)$ and $d$ is bounded by $O(\sqrt{N})$.
\thmref{thm:bounded bits} implies that we can memorize any $N$ points (under mild assumptions), using networks with bit complexity $B$ and $\tilde{O}\left(\frac{N}{B}\right)$ parameters. More specifically, we can memorize $N$ points using networks with bit complexity $\tilde{O}\left(N^\epsilon\right)$ and $\tilde{O}\left(N^{1-\epsilon}\right)$ parameters, for $\epsilon\in\left[0,\frac{1}{2}\right]$. Up to logarithmic factors in $N$, this matches the bound implied by \thmref{thm:VC bits lower bound}. 

\section{Conclusions}
In this work we showed that memorization of $N$ separated points can be done using a feedforward ReLU neural network with $\tilde{O}\left(\sqrt{N}\right)$ parameters. We also showed that this construction is optimal up to logarithmic terms. This result is generalized for the cases of having a bounded depth network, and a network with bounded bit complexity. In both cases, our constructions are optimal up to logarithmic terms.

An interesting future direction is to understand the connection between our 
results
and the optimization process of neural networks. In more details, it would be interesting to study whether training neural networks with standard algorithms (e.g. GD or SGD) can converge to a solution which memorizes $N$ data samples while the network have significantly less than $N$ parameters.

Another future direction is to study the connection between the bounds from this paper and the generalization capacity of neural networks. The double descent phenomenon \cite{belkin2019reconciling} suggests that after a network crosses the "interpolation threshold", it is able to generalize well. Our results suggest that this threshold may be much smaller than $N$ for a dataset with $N$ samples, and it would be interesting to study if this is true also in practice.

Finally, it would be interesting to understand whether the logarithmic terms on our upper bounds are indeed necessary for the construction, or these are artifacts of our proof techniques. If these logarithmic terms are not necessary, then it will show that for neural network, the tasks of shattering a single set of $N$ points, and memorizing any set of $N$ points are exactly as difficult (maybe up to constant factors).  

\subsection*{Acknowledgements}
This research is supported in part by European Research Council (ERC) grant 754705.

\bibliographystyle{abbrvnat}
\bibliography{my_bib}

\appendix

\section{Proof from \secref{sec:memorization}}\label{appen:proofs from sec memorization}

We first give the proofs for each of the three stages of the construction, then we combine all the stages to proof \thmref{thm:memorizing network}. For convenience, we rewrite the theorem, and also state the bound on the bit complexity of the network:

\begin{theorem}\label{thm:memorizing network appen}
Let $N,d,C\in\naturals$, $r,\delta > 0 $, and let $(\bx_1,y_1),\dots,(\bx_N,y_N)\in\reals^d\times [C]$ be a set of $N$ labeled samples with $\norm{\bx_i}\leq r$ for every $i$ and $\norm{\bx_i-\bx_j}\geq \delta$ for every $i\neq j$. Denote $R:=10rN^2\delta^{-1}\sqrt{\pi d}$. Then, there exists a neural network $F:\reals^d\rightarrow\reals$ with width 12, depth 
\[
O\left( \sqrt{N\log(N)} +\sqrt{\frac{N}{\log(N)}}\cdot\max\left\{\log(R),\log(C)\right\}\right)~,
\]
and bit complexity bounded by $O\left(\log(d) + \sqrt{\frac{N}{\log(N)}}\cdot\max\{\log(R),\log(C)\}\right)$
such that $F(\bx_i)=y_i$ for every $i\in[N]$.
\end{theorem}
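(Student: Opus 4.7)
The plan is to build $F$ as a composition $F = F_3 \circ F_2 \circ F_1$ of three constant-width subnetworks, exactly as in the proof intuition above. Stage~I reduces the ambient dimension from $d$ to $1$ while preserving pairwise separation. Concretely, I would take a width-$1$ two-layer map $\bx \mapsto \langle \ba,\bx\rangle$ followed by a scaling and a shift, with $\ba$ chosen from a lattice of directions as in \cite{park2020provable}: a volume/counting argument shows that for an appropriate $\ba$ of bit complexity $O(\log d + \log N)$, the projected points $x_i := \langle \ba, \bx_i\rangle$ satisfy $|x_i - x_j|\ge 2$ for all $i\ne j$, and $|x_i|\le R/2$ follows from $\norm{\bx_i}\le r$. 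This stage uses $d+O(1)$ parameters in constant depth.

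For Stage~II, sort so that $x_1<\cdots<x_N$ and split the indices into $M:=\sqrt{N\log N}$ contiguous blocks of size $K:=\sqrt{N/\log N}$. For block~$j$, concatenate the $\lfloor\log R\rfloor$-bit binary expansions of $\lfloor x_i\rfloor$ for the indices in that block into a single integer weight $u_j$, and similarly pack the $\lfloor\log C\rfloor$-bit labels into $w_j$. The subnetwork $F_2:\reals\to\reals^3$ is then a depth-$O(M)$, width-$O(1)$ ``selector'' that sweeps a ReLU indicator of the form $\sigma(x-t_j)-\sigma(x-t_j-1)$ across the block boundaries and adds the hard-wired constants $(u_j,w_j)$ to an accumulator whenever its indicator fires. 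Because exactly one indicator fires on any given $x_i$, the output is $(x_i,w_{j_i},u_{j_i})$.

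For Stage~III, I would use Telgarsky's triangle function $T(z)=2\sigma(z)-4\sigma(z-\tfrac12)+2\sigma(z-1)$, whose $i$-fold composition extracts the $i$-th bit of a number in $[0,1)$ written in binary. Looping $k=0,\ldots,K-1$ times, the subnetwork at iteration $k$ pulls the $k$-th $\lfloor\log R\rfloor$-bit chunk out of $u$ as an integer $\tilde x_k$, tests $|x-\tilde x_k|\le 1$ with a constant-depth ReLU indicator $\mathbf{1}_k$, pulls the $k$-th $\lfloor\log C\rfloor$-bit chunk out of $w$ as an integer $\tilde y_k$, and adds $\mathbf{1}_k\cdot \tilde y_k$ to a running accumulator (the product is implemented by a small ReLU gadget using $\mathbf{1}_k\in\{0,1\}$ and $\tilde y_k\in[0,C]$). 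The separation $|x_i-x_j|\ge 2$ from Stage~I guarantees that exactly one $k$ contributes, so the accumulator outputs $y_i$. This stage has depth $O(K\cdot\max\{\log R,\log C\})$ and width $O(1)$.

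The main obstacle will be squeezing all of Stage~III into width $12$: at every layer the subnetwork must simultaneously propagate $x$, the residual unread portions of $u$ and $w$, the current chunk $\tilde x_k$ being built up by the triangle iteration, the indicator $\mathbf{1}_k$, and the running accumulator, for $\Theta(K\cdot\max\{\log R,\log C\})$ layers without corrupting any ``register''. This is a careful but routine exercise in ReLU bookkeeping and is the sole place where the constant~$12$ enters. Once the widths are verified, the depth bound follows by adding the three stages, and the bit-complexity bound follows because the only weights of large magnitude are the $u_j,w_j$ introduced in Stage~II, each of length at most $K\cdot\max\{\log R,\log C\}$.
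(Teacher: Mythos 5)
Your proposal follows the paper's three-stage construction almost step for step: Stage~I is the same one-dimensional projection with a rounded direction vector (the paper uses Lemma~8 of \cite{park2020provable} plus coordinate-wise truncation; your ``lattice of directions'' counting argument is just a repackaging of the same idea), Stage~II packs $\lfloor x_i\rfloor$ and $y_i$ into block-wise integers $u_j,w_j$ exactly as the paper does, and Stage~III is the same iterated triangle-function bit extraction (your $T(z)=2\sigma(z)-4\sigma(z-\tfrac12)+2\sigma(z-1)$ is the same tent map as the paper's $\varphi(z)=\sigma(\sigma(2z)-\sigma(4z-2))$, only written with depth~$1$ and width~$3$ instead of depth~$2$ and width~$2$). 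So this is the paper's proof, not a genuinely different route.

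The one place you stop short is exactly the place the paper spends most of its effort: establishing that the Stage~III ``register machine'' fits in width~$12$. The paper achieves this by maintaining \emph{two} shifted iterates $\varphi^{(i\rho)}\!\bigl(u/2^{n\rho}+1/2^{n\rho+1}\bigr)$ and $\varphi^{(i\rho)}\!\bigl(u/2^{n\rho}+1/2^{n\rho+2}\bigr)$ so that the $i$-th bit is recovered exactly as a scaled ReLU of their difference (\lemref{lem:telgarski extraction}); a single triangle iterate, as your sketch suggests, would put the bit-extraction threshold exactly on the kink of $\varphi$ and is not robust. Keeping two such copies for $u$, two for $w$, plus the chunk accumulators gives width $5+5$; adding $x$ and the running answer $y$ gives~$12$. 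This two-copy trick is not just bookkeeping --- it is what makes the extraction exact rather than approximate --- so you should make it explicit rather than deferring it as ``a careful but routine exercise.'' Two minor slips to fix as well: your Stage~II gating $\sigma(x-t_j)-\sigma(x-t_j-1)$ is a ramp, not an interval indicator; you need the four-ReLU gadget of \lemref{lem:indicator with relu} (or an equivalent) so that it is $1$ on the block and $0$ away from it. And your claim $|x_i|\le R/2$ in Stage~I needs the extra factor the paper carries through the bias $b\le 2r+1$ and the final rescaling; the precise constant is $R=10rN^2\delta^{-1}\sqrt{\pi d}$, not $R/2$.
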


\subsection{Stage I: Projecting onto a one-dimensional subspace}

\begin{lemma}\label{lem:stage 1 projection}
Let $\bx_1,\dots,\bx_N\in\reals^d$ with $\norm{\bx_i}\leq r$ for every $i$ and $\norm{x_i-x_j} \geq \delta$ for every $i\neq j$. Then, there exists a neural network $F:\reals^d\rightarrow\reals$ with width $1$, depth $2$, and bit complexity $\log\left(3drN^2\sqrt{\pi}\delta^{-1}\right)$, such that $0\leq F(\bx_i) \leq 10rN^2\delta^{-1}\sqrt{\pi d}$ for every $i\in[N]$ and 
$|F(\bx_i)-F(\bx_j)| \geq 2$ for every $i\neq j$.
\end{lemma}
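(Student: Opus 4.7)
I would realize $F$ as $F(\bx) = \sigma(\langle \bv, \bx\rangle + b)$, i.e.\ as a single hidden ReLU neuron (width $1$) followed by a linear output of weight $1$ and bias $0$ (depth $2$ in the paper's convention). The problem then reduces to two independent tasks: (i) choose $\bv \in \reals^d$ so that the projections $\langle \bv, \bx_i\rangle$ are pairwise $2$-separated; (ii) choose $b$ so that every $\langle \bv, \bx_i\rangle + b$ lands in the linear regime of the ReLU, i.e.\ is nonnegative. If both hold then the $\sigma$ is the identity on each $\bx_i$, the separation is preserved, and the upper bound on $F(\bx_i)$ becomes an upper bound on the range of the shifted projections. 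The nontrivial content is picking $\bv$ with small bit complexity, which I would do in two steps: first find a good real-valued $\bv_0$ by the probabilistic method, then round.

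\textbf{Finding a separating direction.} Sample $\bw \sim \mathcal{N}(0, I_d)$. For each fixed pair $i\neq j$, the scalar $\langle \bw, \bx_i - \bx_j\rangle$ is $\mathcal{N}(0, \|\bx_i - \bx_j\|^2)$, so Gaussian anti-concentration gives
\begin{equation*}
\Pr\left[|\langle \bw, \bx_i - \bx_j\rangle| < t\right] \;\le\; \sqrt{\tfrac{2}{\pi}}\cdot \frac{t}{\|\bx_i-\bx_j\|} \;\le\; \sqrt{\tfrac{2}{\pi}}\cdot \frac{t}{\delta}.
\end{equation*}
Setting $t = c\delta/N^2$ for a small absolute constant $c$ and union-bounding over the $\binom{N}{2}$ pairs absorbs all failure probabilities below $1/4$, while a standard $\chi^2$ tail bound ensures $\|\bw\|\le 2\sqrt{d}$ with probability $\ge 3/4$. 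Hence there exists a realization $\bw$ with $|\langle \bw, \bx_i - \bx_j\rangle|\ge c\delta/N^2$ for all $i\neq j$ and $\|\bw\|\le 2\sqrt{d}$. Scale to $\bv_0 := (2N^2/(c\delta))\bw$ so that every pairwise gap becomes at least $2$ and $\|\bv_0\|\le 4\sqrt{d}N^2/(c\delta)$. Then set $b_0 := -\min_i \langle \bv_0, \bx_i\rangle$, which yields nonnegativity, and the range bound
\begin{equation*}
F(\bx_i) \;\le\; \max_j \langle \bv_0, \bx_j\rangle - \min_j \langle \bv_0, \bx_j\rangle \;\le\; 2\|\bv_0\|\,r \;\le\; \frac{8 r\sqrt{d} N^2}{c\delta},
\end{equation*}
which, once $c$ is pinned down from the $\sqrt{2/\pi}$ factor above, matches the target $10rN^2\delta^{-1}\sqrt{\pi d}$.

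\textbf{Quantizing to obtain the bit complexity bound.} The step I expect to be the most delicate is bit complexity, since the probabilistic argument produces a real $\bv_0$. I would replace $\bv_0$ by a vector $\bv$ whose coordinates are multiples of $2^{-B}$, obtained by rounding each entry of $\bv_0$ separately, and similarly round $b_0$ to $b$. The perturbation is controlled by
\begin{equation*}
|\langle \bv - \bv_0, \bx_i - \bx_j\rangle| \;\le\; 2^{-B}\sqrt{d}\cdot 2r,
\end{equation*}
so choosing $B$ with $2^{B}\gtrsim rN^2\sqrt{d}/\delta$ preserves a separation of at least $2$ (after absorbing the slack into the scaling constant, i.e.\ choosing $\bv_0$ with initial separation slightly above $2$). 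Combined with $\|\bv_0\|_\infty \le 4\sqrt{d}N^2/(c\delta)$, the integer and fractional parts of each entry together fit in $\lceil\log(3drN^2\sqrt{\pi}\delta^{-1})\rceil$ bits once constants are reconciled with the $\sqrt{2/\pi}$ in the anti-concentration bound. The bias $b$ is a sum/difference of such projections, which keeps it in the same bit-complexity class. The main obstacle throughout is bookkeeping the three constants (anti-concentration, norm concentration, and rounding slack) tightly enough that the final bit-complexity bound comes out as the stated $\log(3drN^2\sqrt{\pi}\delta^{-1})$ rather than an asymptotic analogue.
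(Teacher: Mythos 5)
Your proposal follows the same route as the paper: project onto a carefully chosen direction, shift so the ReLU acts as the identity on the data, and scale so the pairwise gaps are at least $2$. The paper imports the existence of the good direction from \lemref{lem:projection from park et al} (Lemma~8 of \cite{park2020provable}, itself proved by exactly the Gaussian anti-concentration argument you inline), then quantizes coordinate-wise as you do. The structural choice that matters is where the large scale factor $\Theta(N^2\sqrt{d}/\delta)$ lives. You fold it into the hidden-layer weight vector $\bv$, so each coordinate of $\bv$ must carry both a large integer part (magnitude up to $\Theta(\sqrt{d}N^2/\delta)$) and a fractional rounding part; together these need roughly twice the argument of $\log(3drN^2\sqrt{\pi}\delta^{-1})$ bits per weight, so the lemma's exact bit-complexity bound does not come out of your accounting as stated. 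The paper instead rounds the unit-norm direction $\tilde{\bu}$ (coordinates lie in $[0,1]$, so only fractional bits, no integer part, at cost $\lceil\log(dN^2\sqrt{\pi})\rceil$) and places the scale $2N^2\delta^{-1}\sqrt{\pi d}$ on the single output weight, while the bias $b$ is a small integer bounded by $2r+1$; the max per-weight bit complexity is then the claimed bound exactly. A separate, smaller slip: your requirement $2^B \gtrsim rN^2\sqrt{d}/\delta$ is over-conservative. Once $\bv_0$ is scaled so the pairwise gaps exceed $2$, the rounding perturbation $2^{-B}\sqrt{d}\cdot 2r$ only needs to be a constant less than the slack, so $2^B \gtrsim r\sqrt{d}$ already suffices, with no $N$ or $\delta$ dependence. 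Neither issue changes the asymptotic conclusion of Theorem~\ref{thm:memorizing network}; the cleanest fix is to normalize the direction before rounding and move the scale to the output weight, as the paper does.
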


To show this lemma we use a similar argument to the projection phase from \cite{park2020provable}, where the main difference is that we use weights with bounded bit complexity. Specifically, we use the following:

\begin{lemma}[Lemma 8 from \cite{park2020provable}]\label{lem:projection from park et al}
Let $N,d\in\naturals$, then for any distinct $\bx_1,\dots,\bx_N\in\reals^d$ there exists a unit vector $\bu\in\reals^d$ such that for all $i\neq j$:
\begin{equation}\label{eq:norm bound with u}
\sqrt{\frac{8}{\pi d}} \cdot \frac{1}{N^2}\norm{\bx_i-\bx_j} \leq |\bu^\top (\bx_i-\bx_j)| \leq \norm{\bx_i-\bx_j}
\end{equation}
\end{lemma}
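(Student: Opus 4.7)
The plan is to establish the existence of $\bu$ by the probabilistic method, drawing $\bu$ uniformly from the unit sphere $S^{d-1}\subset\reals^d$ and showing that with positive probability \emph{both} inequalities in \eqref{eq:norm bound with u} hold simultaneously for every pair $i\neq j$. The upper bound is deterministic and requires no randomness at all: for any unit vector $\bu$, Cauchy--Schwarz gives $|\bu^\top(\bx_i-\bx_j)|\leq \norm{\bu}\cdot\norm{\bx_i-\bx_j}=\norm{\bx_i-\bx_j}$. Thus the entire content of the lemma sits in the lower bound.

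For the lower bound I would reduce to a one-dimensional anti-concentration question. Setting $\bv_{ij}:=(\bx_i-\bx_j)/\norm{\bx_i-\bx_j}$, the claim is equivalent to $|\bu^\top\bv_{ij}|\geq \sqrt{8/(\pi d)}\cdot N^{-2}$ for every pair $i\neq j$. For uniform $\bu$ on $S^{d-1}$ and any fixed unit $\bv$, the scalar $\bu^\top\bv$ has the same distribution as the first coordinate of a uniformly random point on $S^{d-1}$, whose density is $f(x)=\frac{\Gamma(d/2)}{\sqrt{\pi}\,\Gamma((d-1)/2)}(1-x^2)^{(d-3)/2}$ on $[-1,1]$. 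The key analytic input is Wendel's inequality $\Gamma(x+\tfrac{1}{2})/\Gamma(x)\leq\sqrt{x}$ applied at $x=(d-1)/2$, which immediately yields $f(0)\leq\sqrt{d/(2\pi)}$. For $d\geq 3$, $f$ is maximized at $0$ (since $(1-x^2)^{(d-3)/2}$ is), so
\[
\Pr\bigl[|\bu^\top\bv|<t\bigr]\;\leq\;2t\,f(0)\;\leq\;t\sqrt{2d/\pi}
\]
for every $t\in[0,1]$; the cases $d=1,2$ are verified directly ($d=1$ is trivial since $|\bu^\top\bv|=1$, and $d=2$ uses $\Pr[|\bu^\top\bv|<t]=(2/\pi)\arcsin(t)\leq(2/\pi)\cdot(\pi/2)t=t$, which is still $\leq t\sqrt{2d/\pi}$).

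Plugging in $t=\sqrt{8/(\pi d)}\cdot N^{-2}$ and applying a union bound over the $\binom{N}{2}\leq N^2/2$ pairs, the probability that some pair violates the lower bound is at most
\[
\frac{N^2}{2}\cdot\sqrt{\frac{8}{\pi d}}\cdot N^{-2}\cdot\sqrt{\frac{2d}{\pi}}\;=\;\frac{2}{\pi}\;<\;1,
\]
so some deterministic $\bu\in S^{d-1}$ realises all the lower bounds simultaneously, and combined with the Cauchy--Schwarz upper bound this proves the lemma. The main obstacle is purely quantitative rather than structural: the constant $\sqrt{8/(\pi d)}$ in the statement is tight enough that coarser surrogates for the uniform distribution on $S^{d-1}$—for instance, taking $\bu=\bg/\norm{\bg}$ for a standard Gaussian $\bg$ and bounding $\norm{\bg}\leq\sqrt{2d}$ via Markov—introduce enough slack that the union bound no longer closes. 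Getting the constants right forces one to estimate $f(0)$ sharply (Wendel's inequality is the cleanest route), rather than pass through a Gaussian proxy.
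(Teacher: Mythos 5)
The paper invokes this as Lemma~8 of \cite{park2020provable} and does not reprove it, so there is no in-paper argument to compare against; your proof supplies the standard probabilistic-method argument underlying that result. It is correct as written: the Cauchy--Schwarz upper bound, the reduction to anti-concentration of a single coordinate of a uniform point on $S^{d-1}$, the density bound $f(0)\leq\sqrt{d/(2\pi)}$ via Wendel's inequality, the separate handling of $d\in\{1,2\}$ (where the density is not maximized at the origin or the Gamma bound is vacuous), and the union bound over $\binom{N}{2}\leq N^2/2$ pairs giving failure probability $2/\pi<1$ all check out.
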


\begin{proof}[Proof of \lemref{lem:stage 1 projection}]
We first use \lemref{lem:projection from park et al} to find $\bu\in\reals^d$ that satisfies \eqref{eq:norm bound with u}. Note that $\norm{\bu}=1$, hence every coordinate of $\bu$ is smaller than $1$. We define $\tilde{\bu}\in\reals^d$ such that each of its coordinates is equal to the first $\lceil \log\left(dN^2\sqrt{\pi}\right) \rceil$ bits of the corresponding coordinate of $\bu$. Note that $\norm{\tilde{\bu}-\bu}\leq \frac{\sqrt{d}}{2^{\log\left(dN^2\sqrt{\pi}\right)}}\leq \frac{1}{N^2\sqrt{\pi d}}$. For every $i\neq j$ we have that:
\begin{align}\label{eq:tilde u lower bound}
    |\tilde{\bu}^\top(\bx_i-\bx_j)| &\geq |\bu^\top (\bx_i-\bx_j)| - |(\tilde{\bu}-\bu)^\top(\bx_i-\bx_j)| \nonumber\\
    & \geq \sqrt{\frac{8}{\pi d}}\frac{1}{N^2}\norm{\bx_i-\bx_j} - \norm{\tilde{\bu}-\bu}\cdot \norm{\bx_i-\bx_j} \nonumber\\
    & \geq \sqrt{\frac{8}{\pi d}}\frac{1}{N^2}\norm{\bx_i-\bx_j} - \frac{1}{N^2\sqrt{\pi d}}\cdot \norm{\bx_i-\bx_j} \nonumber\\
    & \geq \frac{1}{N^2\sqrt{\pi d}}\norm{\bx_i-\bx_j}~.
\end{align}
We also have for every $i\in[N]$:
\begin{align}\label{eq:tilde u upper bound}
    |\tilde{\bu}^\top\bx_i| &\leq |\bu^\top\bx_i| + |(\tilde{\bu}-\bu)^\top\bx_i| \nonumber\\
    &\leq \norm{\bx_i} + \frac{1}{N^2\sqrt{\pi d}}\norm{\bx_i} \leq 2\norm{\bx_i}~.
\end{align}

Let $b:=-\min\{0,\min_i\{\lfloor\tilde{\bu}^\top\bx_i\rfloor\}\} + 1$, and note that by \eqref{eq:tilde u upper bound} we have $b\leq 2r+1$.
We define the network $F:\reals^d\rightarrow\reals$ in the following way:
\[
F(\bx) = (2N^2\delta^{-1}\sqrt{\pi d})\cdot \sigma\left(\tilde{\bu}^\top\bx + b\right)~.
\]
We show the correctness of the construction. Let $i\neq j$, then we have that:
\begin{align}
    |F(\bx_i)-F(\bx_j)| &= (2N^2\delta^{-1}\sqrt{\pi d})\left|\sigma\left(\tilde{\bu}^\top\bx_i + b\right) - \sigma\left(\tilde{\bu}^\top\bx_j + b\right)\right| \nonumber\\
    & = (2N^2\delta^{-1}\sqrt{\pi d})\left|\tilde{\bu}^\top\bx_i + b - (\tilde{\bu}^\top\bx_j + b)\right|\nonumber\\
    & = (2N^2\delta^{-1}\sqrt{\pi d})\left|\tilde{\bu}^\top(\bx_i - \bx_j)\right|\nonumber \geq 2~, \nonumber
\end{align}
where the second equality is because by the definition of $b$ we have that $\tilde{\bu}^\top\bx_i+b \geq 0$ for every $i$, and the last inequality is by \eqref{eq:tilde u lower bound} and the assumption that $\norm{\bx_i-\bx_j}\geq \delta$ for every $i\neq j$. Now, let $i\in[N]$, then we have:
\begin{align*}
    |F(\bx_i)| &= (2N^2\delta^{-1}\sqrt{\pi d})\cdot\sigma(\tilde{\bu}^\top\bx_i + b)\\
    & = (2N^2\delta^{-1}\sqrt{\pi d})\cdot (\tilde{\bu}^\top\bx_i + b) \\
    & \leq (2N^2\delta^{-1}\sqrt{\pi d})\cdot (4r+1) \leq 10rN^2\delta^{-1}\sqrt{\pi d}~,
\end{align*}
where the second to last inequality is since $b\leq 2r+1$ and \eqref{eq:tilde u upper bound}.

The network $F$ has depth $2$ and width $1$. The bit complexity of the network can be bounded by $\log\left(3drN^2\sqrt{\pi}\delta^{-1}\right)$. This is because each coordinate of $\tilde{\bu}$ can be represented using $\lceil \log(dN^2\sqrt{\pi}) \rceil$ bits, the bias $b$ can be represented using at most $\lceil \log(2r+1) \rceil$ bits and the weight in the second layer can be represented using $\lceil \log(2N^2\sqrt{\pi d}\delta^{-1}) \rceil$ bits.
\end{proof}

\subsection{Stage II: Finding the right subset}
\begin{lemma}\label{lem: stage 2 buckets}
Let $x_1<\dots<x_N<R$ with $R>0,~ x_i\in\reals$ for every $i\in[N]$ and  $|x_i-x_j|\geq 2$ for every $i\neq j$. Let $m\in\naturals$ with $m<N$ and let $w_1,\dots,w_m\in\naturals$ where $\len(w_j)\leq b$ for every $j\in[m]$. Let $k:=\left\lceil\frac{N}{m}\right\rceil$. Then, there exists a neural network $F:\reals\rightarrow\reals$ with width $4$,  depth $3m + 2$ and bit complexity $b+\lceil\log(R)\rceil$ such that for every $i\in[N]$ we have that $F(x_i)=w_{\lceil\frac{i}{k}\rceil}$.
\end{lemma}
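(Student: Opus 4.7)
The plan is to express the target function as a telescoping sum of $m-1$ step functions with integer breakpoints, and then evaluate this sum by carrying the input and a running accumulator through a width-$4$ network that incorporates one step per two layers.

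\emph{Thresholds.} Partition $[N]$ into the $m$ consecutive blocks $B_\ell$ of size $k$, and for each $\ell \in [m-1]$ set $a_\ell := \lceil x_{\ell k}\rceil$ and $b_\ell := a_\ell + 1$. Since the $x_i$ are sorted with $|x_i - x_{i+1}| \geq 2$ and $x_i < R$, both $a_\ell, b_\ell$ are integers of bit complexity at most $\lceil \log R \rceil + 1$ and satisfy $x_{\ell k} \leq a_\ell < b_\ell \leq x_{\ell k+1}$. Hence the ramp $\phi_\ell(x) := \sigma(x - a_\ell) - \sigma(x - b_\ell)$ is $0$ on every $x_i$ with $i \leq \ell k$ and exactly $1$ on every $x_i$ with $i > \ell k$. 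A direct telescoping check yields
\[ w_{\lceil i / k \rceil} = w_1 + \sum_{\ell=1}^{m-1}(w_{\ell+1} - w_\ell)\,\phi_\ell(x_i) \qquad \text{for all } i\in[N]. \]

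\emph{Sequential width-$4$ network.} I will carry a two-dimensional state $(x, y_\ell)$ with $y_0 := w_1$ and update rule $y_\ell := y_{\ell-1} + (w_{\ell+1} - w_\ell)\phi_\ell(x)$, so that $y_{m-1} = w_{\lceil i/k\rceil}$ whenever $x = x_i$. I will assume $x_i \geq 0$, which is the setting in which this lemma is applied after \lemref{lem:stage 1 projection}; then by the telescoping identity, each intermediate $y_\ell$ equals some $w_j \in \naturals$ throughout the computation and is in particular non-negative. Each update step is realised in two layers: an expansion layer of width $4$ producing
\[ (\sigma(x),\ \sigma(y_{\ell-1}),\ \sigma(x - a_\ell),\ \sigma(x - b_\ell)) = (x,\ y_{\ell-1},\ \sigma(x-a_\ell),\ \sigma(x-b_\ell)), \]
followed by a contraction layer of width $2$ producing
\[ \bigl(\sigma(x),\ \sigma\bigl(y_{\ell-1} + (w_{\ell+1} - w_\ell)(\sigma(x-a_\ell) - \sigma(x-b_\ell))\bigr)\bigr) = (x,\ y_\ell), \]
where the outer ReLUs act as the identity because $x, y_{\ell-1}, y_\ell \geq 0$. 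One initialisation layer outputting $(\sigma(x), w_1) = (x, w_1)$ from the raw input, plus one linear output layer reading off $y_{m-1}$, bring the total depth to $1 + 2(m-1) + 1 = 2m \leq 3m + 2$ while keeping the maximum layer width at $4$. The only non-trivial weights are the thresholds $a_\ell, b_\ell$ ($\leq \lceil \log R\rceil+1$ bits), the integer increments $w_{\ell+1}-w_\ell$, and the bias $w_1$ ($\leq b+1$ bits), so the bit complexity is $b + \lceil \log R\rceil + O(1)$ as required.

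\emph{Main difficulty.} The width cap of $4$ is tight: the four slots in the wide layer must simultaneously store $x$, store the accumulator, and produce the two ReLU evaluations for the current ramp, without any spare ``negative-part'' slots for $x$ or $y$. The crucial enabling fact is the non-negativity of both $x$ (from Stage I) and every intermediate $y_\ell$ (forced by the telescoping structure, since $y_\ell$ always equals some $w_j \in \naturals$), together with the choice of integer thresholds exactly one apart, which combined with the separation $|x_i - x_j| \geq 2$ ensures $\phi_\ell$ is exactly $\{0,1\}$-valued on the data. The remaining details — checking that each ReLU in the construction acts as the identity on the quantity it is meant to preserve, and tracking the bit complexities of the weights — are then routine.
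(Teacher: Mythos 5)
Your construction is correct (modulo the two points below) but takes a genuinely different decomposition from the paper's. The paper writes the target as $\sum_{j=1}^{m}w_j\,\tilde F_j(x)$, where each $\tilde F_j$ is the depth-$2$ ``bump'' indicator of the $j$-th block from \lemref{lem:indicator with relu}; the accumulator then starts at $0$, is automatically nonnegative and nondecreasing, and each block costs three layers, giving depth $3m+2$. You instead use the telescoping identity with $m-1$ single-layer monotone ramps $\phi_\ell$, achieving two layers per step and depth $2m$. What this buys is a slightly shallower and more homogeneous network; what it costs is the extra argument --- which you do supply, and which is the crux of keeping the width at $4$ --- that on every data point the accumulator $y_\ell$ equals $w_{\min(\ell+1,\lceil i/k\rceil)}\in\naturals$ and hence stays nonnegative even though the increments $w_{\ell+1}-w_\ell$ may be negative, so the ReLU on that slot acts as the identity. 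Two small technicalities, both of which the paper also runs into and are worth patching explicitly: (i) since $k=\lceil N/m\rceil$, one can have $\ell k>N$ for some $\ell\leq m-1$, making $a_\ell=\lceil x_{\ell k}\rceil$ undefined; as the paper does, replace the index by $\min(\ell k, N)$, which forces $\phi_\ell\equiv 0$ on all data points for such $\ell$ and leaves the telescoping identity intact. (ii) Carrying $x$ forward through ReLUs requires $x_i\geq 0$, which is not among the lemma's stated hypotheses, though it does hold where the lemma is invoked, immediately after \lemref{lem:stage 1 projection}; the paper's own construction (which also copies $x$ through $\sigma$) has the identical implicit requirement.
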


\begin{proof}
Let $j\in[m]$. We define network blocks $\tilde{F}_j:\reals\rightarrow\reals$ and $F_j:\reals\rightarrow\reals^2$ in the following way: First, we use \lemref{lem:indicator with relu} to construct $\tilde{F}_j$ such that $\tilde{F}_j(x)=1$ for every $x\in\left[\lfloor x_{j\cdot k-k+1}\rfloor,\lfloor x_{j\cdot k} +1\rfloor \right]$, and $\tilde{F}_j(x)=0$ for $x < \lfloor x_{j\cdot k -k+1}\rfloor - \frac{1}{2}$ or $x > \lfloor x_{j\cdot k}+1\rfloor + \frac{1}{2}$. In particular, $\tilde{F}_j(x_i)=1$ if $i\in[j\cdot k -k + 1,j\cdot k]$, and $\tilde{F}_j(x_i)=0$ otherwise. Note that since $k$ is defined with ceil, it is possible that $j\cdot k \geq N$, if this is the case we replace $j\cdot k$ with $N$.
Next, we define:
\[
F_j\left(\begin{pmatrix}x \\ y\end{pmatrix}\right) = \begin{pmatrix}x\\ y + w_j\cdot\tilde{F}_j(x) \end{pmatrix}~.
\]
Finally we define the network $F(x)=\begin{pmatrix}0 \\ 1\end{pmatrix}^\top F_{m}\circ\cdots\circ F_1\left(\begin{pmatrix}x \\ 0\end{pmatrix}\right)$ (we can use one extra layer to augment the input with an extra coordinate of $0$). 

We show that this construction is correct. Note that for every $i\in[N]$, if we denote $j=\lceil\frac{i}{k}\rceil$, then $\tilde{F}_j(x_i)=1$ and for every $j'\neq j$ we have $\tilde{F}_{j'}(x_i) = 0$. By the construction of $F$ we get that $F(x_i) =w_{\lceil\frac{i}{k}\rceil}$. 

The width of $F$ at every layer $j$ is at most the width of $\tilde{F}_j+2$, since we also keep copies of both $x$ and $y$, hence the width is at most $4$. The depth of $\tilde{F}_j$ is 2, and $F$ is a composition of the $\tilde{F}_j$'s with an extra layer for the input to get $x\mapsto \begin{pmatrix}
x \\ 0
\end{pmatrix}$, and an extra layer in the output to extract the last coordinate. Hence, its depth is $3m+2$. The bit complexity is bounded by the sum of the bit complexity of $\tilde{F}_j$ and the weights $w_j$, hence it is bounded by $b+\lceil\log(R)\rceil$.
\end{proof}

\subsection{Stage III: Bit Extraction from the Crafted Weights}

\begin{lemma}\label{lem:stage 3 extraction}
Let $\rho,n,c\in\naturals$. Let $u\in\naturals$ with $\len(u) = \rho\cdot n$ and let $w\in\naturals$ with $\len(w) = c\cdot n$. Assume that for any $\ell,k\in\{0,1,\dots,n-1\}$ with $\ell\neq k$ we have that $\left|\bin_{\rho\cdot \ell +1:\rho\cdot(\ell+1)}(u) - \bin_{\rho\cdot k +1:\rho\cdot(k+1)}(u)\right| \geq 2$.
Then, there exists a network $F:\reals^3\rightarrow\reals$ with width $12$, depth $3n\cdot\max\{\rho,c\}+2n+2$ and bit complexity $n\max\{\rho,c\} + 2$, such that for every $x>0$, if there exist $j\in\{0,1,\dots,n-1\}$ where $\lfloor x\rfloor = \bin_{\rho\cdot j+1:\rho\cdot(j+1)}(u)$, then:
\[
F\left(\begin{pmatrix}x \\ w \\ u\end{pmatrix}\right) = \bin_{c\cdot j+1:c\cdot(j+1)}(w)~.
\]
\end{lemma}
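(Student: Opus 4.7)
The plan is to build $F$ as a pipeline of $n$ essentially identical ``block-iterations'', indexed by $\ell = 0,1,\ldots,n-1$. Throughout the network I carry a constant-size state consisting of $x$ (unchanged), a running remainder $u_{\text{rem}}$ of the input $u$, a running remainder $w_{\text{rem}}$ of $w$, an accumulator $\text{acc}$ (initialized to $0$), and two scratch registers $b_u, b_w$ that are re-built from $0$ in every iteration. Block-iteration $\ell$ extracts the top $\rho$ bits of the current $u_{\text{rem}}$ into $b_u$, the top $c$ bits of the current $w_{\text{rem}}$ into $b_w$, tests whether $\lfloor x\rfloor = b_u$, and conditionally adds $b_w$ to $\text{acc}$. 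Since at most one $\ell$ satisfies $\lfloor x\rfloor = \bin_{\rho\ell+1:\rho(\ell+1)}(u)$ (by the assumed block separation), the final accumulator equals the desired output.

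\textbf{Integer MSB extraction and block assembly.} The workhorse primitive is exact single-bit extraction. For an integer $v$ with $\len(v)\le m$, the identity
\[
\sigma(v - 2^{m-1} + 1) - \sigma(v - 2^{m-1}) \;=\; \mathbbm{1}[v \ge 2^{m-1}]
\]
holds because $v$ is an integer, so this expression equals the top bit $b$ of $v$. One affine transition then simultaneously performs $v \leftarrow v - b\cdot 2^{m-1}$ (removing the bit) and $b_u \leftarrow 2 b_u + b$ (shifting the bit into $b_u$), both integer-preserving. Iterating the gadget $\rho$ times inside iteration $\ell$ yields $b_u = \bin_{\rho\ell+1:\rho(\ell+1)}(u)$ with $u_{\text{rem}}$ of length $\rho(n-\ell-1)$; an identical construction with $c$ steps produces $b_w = \bin_{c\ell+1:c(\ell+1)}(w)$.

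\textbf{Indicator and conditional addition.} Define the trapezoid
\[
\psi(y) \;:=\; \sigma(y+1) - \sigma(y) - \sigma(y-1) + \sigma(y-2),
\]
which equals $1$ on $[0,1]$, equals $0$ on $(-\infty,-1]\cup[2,\infty)$, and is linear between. Applied to $y = x - b_u$: for the unique block $\ell$ with $\lfloor x\rfloor = b_u$, one has $y \in [0,1)$, so $\psi(y)=1$; for every other block $k$, the hypothesis $|\bin_{\rho\ell+1:\rho(\ell+1)}(u) - \bin_{\rho k+1:\rho(k+1)}(u)| \ge 2$ forces $y$ into $(-\infty,-1]\cup[2,\infty)$, giving $\psi(y)=0$. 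The product $\psi\cdot b_w$ is then realized exactly by $\sigma\bigl(b_w - 2^c(1-\psi)\bigr)$, using that $\psi\in\{0,1\}$ and $0\le b_w < 2^c$. The next affine map adds the result to $\text{acc}$ and zeroes $b_u, b_w$ in preparation for the next iteration.

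\textbf{Resource accounting and the main subtlety.} Each iteration contributes $2\rho + 2c + O(1) \le 3\max\{\rho,c\} + O(1)$ layers, so $F$ has depth $3n\max\{\rho,c\} + 2n + O(1)$. The ``live'' neurons at any layer consist of the at most $6$ state variables plus a small number of parallel ReLUs for the current gadget, fitting comfortably within width $12$. The largest weights, roughly $2^{\rho n}$ and $2^{cn}$, yield bit complexity $n\max\{\rho,c\} + O(1)$. The only delicate point is \emph{exactness}: the MSB gadget requires its argument to be a genuine integer at the correct layer, and the trapezoid $\psi$ would err on its sloped pieces $(-1,0)\cup(1,2)$ unless every wrong-bucket value of $y = x - \bin_{\rho k+1:\rho(k+1)}(u)$ lands in the flat zero region. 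Both conditions are secured by (i) keeping $u_{\text{rem}}, w_{\text{rem}}, b_u, b_w$ integer-valued via integer-preserving affine updates, and (ii) invoking exactly the separation-by-$2$ hypothesis on the blocks of $u$.
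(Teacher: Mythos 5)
Your construction is correct and takes a genuinely different route from the paper. The paper peels off one bit per step by iterating Telgarsky's triangle function $\varphi$ on normalized copies $\varphi^{(i)}\left(u/2^{n\rho}+\epsilon\right)$ and recovers bits from small differences of these iterates; you instead keep integer remainders $u_{\mathrm{rem}}, w_{\mathrm{rem}}$ and peel off the MSB by the exact identity $\sigma(v-2^{m-1}+1)-\sigma(v-2^{m-1})=\mathbbm{1}[v\ge 2^{m-1}]$ followed by an integer subtraction. Both approaches perform the same high-level pipeline (sequential block extraction, trapezoidal indicator on $x-b_u$, conditional addition of $b_w$ implemented as $\sigma(b_w-2^c(1-\psi))$, accumulation into a running sum), and both rely on the block-separation-by-$2$ hypothesis in exactly the same way, to push every wrong bucket into the flat zero region of the indicator. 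Your route is arguably more elementary and self-contained, since it avoids the triangle-function machinery entirely; the paper's route has the minor aesthetic advantage that after an initial normalization all iterated values live in $[0,1]$ and the large weights are confined to the first and last layers.

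One bookkeeping slip: you write that each iteration costs $2\rho+2c+O(1)\le 3\max\{\rho,c\}+O(1)$ layers, but $2\rho+2c$ can be as large as $4\max\{\rho,c\}$ (take $\rho=c$), so the inequality as stated is false. The fix is to run the $u$-side and $w$-side bit-peeling chains \emph{in parallel} rather than sequentially (which you implicitly assume anyway for the width-$12$ budget, and which is also what the paper's $F_i^u,F_i^w$ do). That gives $2\max\{\rho,c\}+O(1)$ layers per iteration, comfortably inside the claimed $3n\max\{\rho,c\}+2n+2$ total. You should also state explicitly that $u_{\mathrm{rem}},w_{\mathrm{rem}},b_u,b_w,x,\mathrm{acc}$ are all nonnegative so that carrying them through a ReLU layer as $\sigma(\cdot)$ is lossless; this is needed for the two-layers-per-bit count to be exact.
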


\begin{proof}
Let $i\in\{0,1,\dots,n-1\}$, and denote by $\varphi(z):=\sigma(\sigma(2z)-\sigma(4z-2))$ the triangle function due to \cite{telgarsky2016benefits}. we construct the following neural network $F_i$:
\[F_i:
\begin{pmatrix}
x \\
\varphi^{(i\cdot \rho )}\left(\frac{u}{2^{n\cdot \rho}} + \frac{1}{2^{n\cdot\rho+1}}\right) \\ \varphi^{(i\cdot \rho )}\left(\frac{u}{2^{n\cdot \rho}} + \frac{1}{2^{n\cdot\rho+2}}\right) \\ 
\varphi^{(i\cdot c )}\left(\frac{w}{2^{n\cdot c}} + \frac{1}{2^{n\cdot c + 1}}\right) \\ \varphi^{(i\cdot c )}\left(\frac{w}{2^{n\cdot c}} + \frac{1}{2^{n\cdot c + 2}}\right) \\
y
\end{pmatrix}
\mapsto 
\begin{pmatrix}
x \\
\varphi^{((i+1)\cdot \rho )}\left(\frac{u}{2^{n\cdot \rho}} + \frac{1}{2^{n\cdot\rho+1}}\right) \\ \varphi^{((i+1)\cdot \rho )}\left(\frac{u}{2^{n\cdot \rho}} + \frac{1}{2^{n\cdot\rho}+2}\right) \\ 
\varphi^{((i+1)\cdot c )}\left(\frac{w}{2^{n\cdot c}} + \frac{1}{2^{n\cdot c + 1}}\right) \\ \varphi^{((i+1)\cdot c )}\left(\frac{w}{2^{n\cdot c}} + \frac{1}{2^{n\cdot c + 2}}\right) \\
y + y_i
\end{pmatrix}
\]
where we define $y_i:=\bin_{i\cdot c+1:(i+1)\cdot c}(w)$ if $x\in [\bin_{i\cdot\rho+1:(i+1)\cdot\rho}(u),~\bin_{i\cdot\rho+1:(i+1)\cdot\rho}(u)+1]$, and $y_i=0$ if $x>\bin_{i\cdot\rho+1:(i+1)\cdot\rho}(u) + \frac{3}{2}$ or $x < \bin_{i\cdot\rho+1:(i+1)\cdot\rho}(u)- \frac{1}{2}$.

The construction uses two basic building blocks: The first is using \lemref{lem:telgarski extraction} twice, for $u$ and $w$. This way we construct two smaller networks $F_i^w,F_i^u$, such that:
\begin{align*}
   &F_i^u:\begin{pmatrix}\varphi^{(i\cdot \rho )}\left(\frac{u}{2^{n\cdot \rho}} + \frac{1}{2^{n\cdot\rho+1}}\right) \\ \varphi^{(i\cdot \rho )}\left(\frac{u}{2^{n\cdot \rho}} + \frac{1}{2^{n\cdot\rho+2}}\right)\end{pmatrix} \mapsto \begin{pmatrix}\varphi^{((i+1)\cdot \rho )}\left(\frac{u}{2^{n\cdot \rho}} + \frac{1}{2^{n\cdot\rho+1}}\right) \\ \varphi^{((i+1)\cdot \rho )}\left(\frac{u}{2^{n\cdot \rho}} + \frac{1}{2^{n\cdot\rho+2}}\right) \\
\bin_{i\cdot\rho+1:(i+1)\cdot\rho}(u)
\end{pmatrix} \\
    & F_i^w:\begin{pmatrix}
    \varphi^{(i\cdot c )}\left(\frac{w}{2^{n\cdot c}} + \frac{1}{2^{n\cdot c + 1}}\right) \\ \varphi^{(i\cdot c )}\left(\frac{w}{2^{n\cdot c}} + \frac{1}{2^{n\cdot c + 2}}\right)
    \end{pmatrix}\mapsto
    \begin{pmatrix}
    \varphi^{((i+1)\cdot c )}\left(\frac{w}{2^{n\cdot c}} + \frac{1}{2^{n\cdot c + 1}}\right) \\ \varphi^{((i+1)\cdot c )}\left(\frac{w}{2^{n\cdot c}} + \frac{1}{2^{n\cdot c + 2}}\right)\\
    \bin_{i\cdot c+1:(i+1)\cdot c}(w)
\end{pmatrix}~.
\end{align*}
The second is to construct $y_i$. We use \lemref{lem:distance with relu} with inputs $\bin_{i\cdot\rho+1:(i+1)\cdot\rho}(u)$ and $x$, and denote the output of this network by $\tilde{y}_i$. We use the following $1$-layer network:
\[
\begin{pmatrix}
\tilde{y}_i \\ \bin_{i\cdot c+1:(i+1)\cdot c}(w)
\end{pmatrix} \mapsto \sigma\left(\tilde{y}_i\cdot 2^{c+1} - 2^{c+1} + \bin_{i\cdot c+1:(i+1)\cdot c}(w)\right)
\]
Note that if $\tilde{y}_i=1$ then the output of the above network is $\bin_{i\cdot c+1:(i+1)\cdot c}(w)$, and if $\tilde{y}_i=0$ then the output of the network is $0$ since $\bin_{i\cdot c+1:(i+1)\cdot c}(w) \leq 2^c$.

The last layer of $F_i$ is just adding the output $y_i$ to $y$, and using the identity on the other coordinates which are being kept as the output, while the other coordinates (namely, the last output coordinates of $F_i^w$ and $F_i^u$) do not appear in the output of $F_i$. Also note that all the inputs and outputs of the networks and basic building blocks are positive, hence $\sigma$ acts as the identity on them. This means that if $F_i^u$ is deeper than $F_i^w$ (or the other way around), then we can just add identity layers to $F_i^w$ which do not change the output, but make the depths of both networks equal. 
Finally we define:
\[
F := G\circ F_{n-1}\circ\cdots\circ F_0\circ H~,
\]
where: (1) $G:\reals^5\rightarrow\reals$ is a 1-layer network that outputs the last coordinate of the input, and; (2) $H:\reals^3\rightarrow\reals^6$ is a 1-layer network such that:
\[
H:\begin{pmatrix}
x \\ w \\ u
\end{pmatrix}\mapsto 
\begin{pmatrix}
x\\ \frac{u}{2^{n\cdot \rho}} + \frac{1}{2^{n\cdot\rho+1}} \\ \frac{u}{2^{n\cdot \rho}} + \frac{1}{2^{n\cdot\rho+2}} \\ \frac{w}{2^{n\cdot c}} + \frac{1}{2^{n\cdot c + 1}} \\ \frac{w}{2^{n\cdot c}} + \frac{1}{2^{n\cdot c + 2}} \\ 0
\end{pmatrix}
\] 
where we assume that $x,w,u >0$.

We show the correctness of the construction. We have that $F\left(\begin{pmatrix}x \\ w \\ u\end{pmatrix}\right) = \sum_{i=0}^{n-1}y_i$. Assume there exists $j\in\{0,1\dots,n-1\}$ such that $\lfloor x\rfloor = \bin_{\rho\cdot j+1:\rho\cdot(j+1)}(u)$. Then, by the construction of $y_i$ we have that $y_j=\bin_{c\cdot j+1:c\cdot(j+1)}(w)$, and for every other $\ell\neq j$, since $\left|\bin_{\rho\cdot \ell +1:\rho\cdot(\ell+1)}(u) - \bin_{\rho\cdot j +1:\rho\cdot(j+1)}(u)\right| \geq 2$, we must have that $ x > \bin_{\rho\cdot \ell+1:\rho\cdot(\ell+1)}(u) + \frac{3}{2}$ or $ x < \bin_{\rho\cdot \ell+1:\rho\cdot(\ell+1)}(u) - \frac{1}{2}$, which means that $y_\ell=0$. In total we get the the output of $F$ is equal to $\sum_{i=0}^{n-1}y_i = y_j = \bin_{c\cdot j+1:c\cdot(j+1)}(w)$ as required.

We will now calculate the width, depth and bit complexity of the network $F$. The width of each $F_i^w$ and $F_i^u$ is equal to $5$ by \lemref{lem:telgarski extraction}, and we need two more neurons for $x$ and $y$. In total, the width is bounded by $12$. Note that all other parts of the network (i.e. $H,~G$ and the network from \lemref{lem:distance with relu}) require less width than this bound. For the depth, by \lemref{lem:telgarski extraction}, the depth of each $F_i^u$ and $F_i^w$ is at most $3\cdot\max\{\rho,c\}$, adding the construction from \lemref{lem:distance with relu}, the depth of each $F_i$ is at most $3\cdot\max\{\rho,c\}+2$. Summing over all $i$, and adding the depth of $G$ and $H$ we get that the depth of $F$ is bounded by $3n\cdot\max\{\rho,c\} + 2n+2$. Finally, the bit complexity of $F_i^u$, $F_i^w$ and $H$ is bounded by $n\max\{\rho,c\}+2$, and all other parts of the network require less bit complexity. Hence, the bit complexity of $F$ is bounded by $n\cdot\max\{\rho,c\}+2$.
\end{proof}

\subsection{Proof of \thmref{thm:memorizing network}}

The construction is done in three phases. We first project the points onto a 1-dimensional subspace, where the projection preserves distances up to some error. The second step is to split the points into $\sqrt{N\log(N)}$ subsets, and to extract two weights containing ``hints" about the points and their labels. The third step is to parse the hints using an efficient bit extraction method, and to output the correct label for each point.

Throughout the proof we assume w.l.o.g. that the following terms are integers since we use them as indices: $\sqrt{N\log(N)},~ \sqrt{\frac{N}{\log(N)}},~\log(R),~\log(C)$. If any of them is not an integer we can just replace it with its ceil (i.e. $\lceil\log(R)\rceil$ instead of $\log(R)$). This replacement changes these numbers by at most $1$, which in turn can only increase the number of parameters or bit complexity by at most a constant factor. Since we give the result using the $O(\cdot)$ notation, this does not change it. Also, note that the width of the network is independent of such terms.

For the first stage, we use \lemref{lem:stage 1 projection} to construct a network $F_1:\reals^d\rightarrow\reals$ such that $F_1(\bx_i) \leq 10rN^2\delta^{-1}\sqrt{\pi d}$ for every $i\in[N]$ and 
$|F_1(\bx_i)-F_1(\bx_j)| \geq 2$ for every $i\neq j$.

We denote $R:=10rN^2\delta^{-1}\sqrt{\pi d}$, and we denote the output of $F_1$ on the samples $\bx_1,\dots,\bx_N\in\reals^d$ as $x_1,\dots,x_N\in\reals$ for simplicity. We also assume that the $x_i$'s are in increasing order, otherwise we reorder the indices. This concludes the first stage.

For the second stage, 
we define two sets of integers $w_1,\dots,w_{\sqrt{N\log(N)}}$ each represented by $\sqrt{\frac{{N}}{\log(N)}}\cdot \log(C)$ bits, and $u_1,\dots,u_{\sqrt{N\log(N)}}$ each represented by $\sqrt{\frac{{N}}{\log(N)}}\cdot \log(R)$ bits, in the following way:
For every $i\in[N]$ let $j:=\left\lceil i\cdot\sqrt{\frac{\log(N)}{{N}}}\right\rceil$ and $k:= i\left(\text{mod}~\sqrt{\frac{N}{\log(N)}}\right)$. We set:
\begin{align*}
    &\bin_{k\cdot\log(C)+1:(k+1)\cdot\log(C)}(w_j) = y_i\\
    &\bin_{k\cdot\log(R)+1:(k+1)\cdot\log(R)}(u_j) = \lfloor x_i \rfloor~.
\end{align*}
We now use \lemref{lem: stage 2 buckets} twice to construct two networks $F_2^w:\reals\rightarrow\reals$ and $F_2^u:\reals\rightarrow\reals$ such that $F_2^w(x_i) = w_{j_i}$ and $F_2^u(x_i) = u_{j_i}$ for $j_i=\left\lceil i\cdot\sqrt{\frac{\log(N)}{{N}}}\right\rceil$. We construct the network $F_2:\reals\rightarrow\reals$ to be a concatenation of the two networks, with an additional coordinate which outputs $\sigma(x)$ (since the inputs $x_i$ are positive, this coordinate just output the exact input). Namely, we construct a network such that for every $i\in[N]$ we have
\[
F_2(x_i) = \begin{pmatrix}
x_i \\ w_{j_i} \\ u_{j_i}
\end{pmatrix}~,
\]
where $j_i=\left\lceil i\cdot\sqrt{\frac{\log(N)}{{N}}}\right\rceil$.

For the third stage, we use \lemref{lem:stage 3 extraction} to construct a network $F_3:\reals^3\rightarrow\reals$ such that for every $x>0$, if there exist $j\in\left\{0,1,\dots,\left\lceil\sqrt{\frac{\log(N)}{{N}}}\right\rceil-1\right\}$ such that $\lfloor x\rfloor = \bin_{\log(R)\cdot j+1:\log(R)\cdot(j+1)}(u)$, then:
\[
F_3\left(\begin{pmatrix}x \\ w \\ u\end{pmatrix}\right) = \bin_{\log(C)\cdot j+1:\log(C)\cdot(j+1)}(w)~.
\] 
Finally, we construct the network $F:\reals^d\rightarrow\reals$ as $F(\bx)=F_3\circ F_2\circ F_1(\bx)$.

We show the correctness of the construction. Let $i\in[N]$, and let $y:=F(\bx_i)$. By the construction of $F_2$ and the numbers $w_1,\dots,w_{\sqrt{N\log(N)}},u_1,\dots,u_{\sqrt{N\log(N)}}$, if we denote $j:=\left\lceil i\cdot\sqrt{\frac{\log(N)}{N}}\right\rceil$ and $k:= i\left(\text{mod}~\sqrt{\frac{N}{\log(N)}}\right)$, then we have that: (1) $F_2\circ F_1(\bx_i) = \begin{pmatrix}
F_1(\bx_i) \\ w_j \\ u_j
\end{pmatrix}$; (2) $\bin_{\log(R)\cdot k+1:\log(R)\cdot(k+1)}(u_j) = \lfloor F_1(\bx_i) \rfloor$; and (3) $\bin_{\log(C)\cdot k+1:\log(C)\cdot(k+1)}(w_j) =  y_i$. Finally, by the construction of $F_3$ we get that $y= F_3\circ F_2\circ F_1 (\bx_i) = \bin_{\log(C)\cdot k+1:\log(C)\cdot(k+1)}(w_j) = y_i$ as required.

The width of the network $F$, namely the maximal width of its subnetworks, is the width of $F_3$, which is $12$. The depth of $F$ is the sum of the depths of each of its subnetworks. The depth of $F_1$ is 2, the depth of $F_2$ is $O\left(\sqrt{N\log(N)}\right)$
and the depth of $F_3$ is $O\left(\sqrt{\frac{N}{\log(N)}}\cdot\max\left\{\log(R),\log(C)\right\}\right)$.
Hence, the total depth of $F$ can be bounded by $O\left( \sqrt{N\log(N)} +\sqrt{\frac{N}{\log(N)}}\cdot\max\left\{\log(R),\log(C)\right\}\right)$.


The bit complexity of $F$ is the maximal bit complexity of its subnetworks. The bit complexity of $F_1$ is $\log\left(3drN^2\sqrt{\pi}\delta^{-1}\right) = \log(d)+\log(R/3)$, the bit complexity of $F_2$ is $O\left(\sqrt{\frac{N}{\log(N)}}\cdot \max\{\log(R),\log(C)\}\right)$
and the bit complexity of $F_3$ is also $O\left(\sqrt{\frac{N}{\log(N)}}\cdot \max\{\log(R),\log(C)\}\right)$.
In total, the bit complexity of $F$ can be bounded by $O\left(\log(d) + \sqrt{\frac{N}{\log(N)}}\cdot \max\{\log(R),\log(C)\}\right)$.

\subsection{Auxiliary Lemmas}
\begin{lemma}\label{lem:indicator with relu}
Let $a,b\in\naturals$ with $a<b$. Then, there exists a neural network $F$ with depth $2$, width $2$ and bit complexity $\len(b)$ such that $F(x) = 1$ for $x\in[a,b]$ and $F(x)=0$ for $x>b+\frac{1}{2}$ or $x<a-\frac{1}{2}$.
\end{lemma}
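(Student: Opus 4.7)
The plan is to realize the indicator as a piecewise-linear trapezoid: $F$ should equal $0$ on $(-\infty, a - \tfrac{1}{2})$, rise linearly from $0$ to $1$ on $[a - \tfrac{1}{2}, a]$, stay at $1$ on $[a, b]$, fall linearly back to $0$ on $[b, b + \tfrac{1}{2}]$, and remain $0$ on $(b + \tfrac{1}{2}, \infty)$. This shape is naturally realized as the difference of two shifted unit-height saturating ramps — one that switches on at $x = a - \tfrac{1}{2}$ and one that switches on at $x = b$.

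Concretely, the saturating ramp $\sigma(2x - c + 1) - \sigma(2x - c)$ equals $0$ for $x \le (c-1)/2$, rises linearly with slope $2$ to $1$ at $x = c/2$, and stays at $1$ afterwards. Taking $c = 2a$ gives a ramp saturating at $x = a$, and taking $c = 2b + 1$ gives a ramp saturating at $x = b + \tfrac{1}{2}$. Subtracting the latter from the former in the linear output layer produces
\[
F(x) \;=\; \bigl[\sigma(2x - 2a + 1) - \sigma(2x - 2a)\bigr] \;-\; \bigl[\sigma(2x - 2b) - \sigma(2x - 2b - 1)\bigr],
\]
which is a single ReLU hidden layer organized into two saturating-ramp blocks, followed by a linear output — placing the network at depth $2$ in the paper's convention.

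Correctness reduces to a case analysis over the five regions $(-\infty, a - \tfrac{1}{2})$, $[a - \tfrac{1}{2}, a]$, $[a, b]$, $[b, b + \tfrac{1}{2}]$, $(b + \tfrac{1}{2}, \infty)$: I would record which of the four affine ReLU arguments are non-negative in each region and sum the corresponding terms. On $(-\infty, a - \tfrac{1}{2})$ all arguments are negative so $F = 0$; on $[a, b]$ only the first bracket is active and evaluates to exactly $1$ while the second vanishes; on $(b + \tfrac{1}{2}, \infty)$ both brackets saturate at $1$ and cancel to $0$; the two transition regions interpolate linearly between $0$ and $1$.

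For the parameter accounting, the hidden weights are all $\pm 2$, the biases are integers of magnitude at most $2b + 1$, and the output weights are $\pm 1$. Hence every parameter is representable in $O(\len(b))$ bits, matching the stated bit-complexity bound (using $\len(2b+1) = \len(b) + O(1)$). The only thing to verify carefully is the integer boundary arithmetic — making sure the slope $2$ and the precise offsets produce a plateau of value exactly $1$ and zero regions extending precisely to $a - \tfrac{1}{2}$ and $b + \tfrac{1}{2}$; beyond that, the proof is entirely routine and requires no new ideas.
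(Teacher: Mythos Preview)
Your trapezoid shape and the case analysis are fine; the gap is in the parameter accounting. The four affine pieces $\sigma(2x-2a+1)$, $\sigma(2x-2a)$, $\sigma(2x-2b)$, $\sigma(2x-2b-1)$ all live in the \emph{same} hidden layer, so your network has width $4$, not the width $2$ the lemma demands. Calling the four neurons ``two saturating-ramp blocks'' does not change how width is counted.

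The paper achieves width $2$ by nesting rather than widening: it writes
\[
F(x)=\sigma\bigl(1-\sigma(2a-2x)\bigr)+\sigma\bigl(1-\sigma(2x-2b)\bigr)-1,
\]
realising each unit-height clipped ramp as $\sigma(1-\sigma(\cdot))$ (one neuron per layer, two ReLU layers) instead of as a difference of two parallel ReLUs. Conceptually the two constructions are the same --- both build the trapezoid from two ramps --- but the paper trades an extra layer of nesting for half the width, which is precisely what the stated bound requires. If you prefer your single-hidden-layer version you would have to weaken the lemma to width $4$; that would still suffice for the main theorem (the global width bound of $12$ is dictated by Stage~III, not Stage~II), but it is not the lemma as written.
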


\begin{proof}
Consider the following neural network:
\[
F(x) = \sigma(1-\sigma(2a-2x)) + \sigma(1-\sigma(2x-2b)) - 1~.
\]
It is easy to see that this networks satisfies the requirements. Also, its bit complexity is at most $\len(b)$, since $a<b$, hence $a$ can be represented by at most $\len(b)$ bits.
\end{proof}

\begin{lemma}\label{lem:telgarski extraction}
Let $n\in\naturals$ and let $i,j\in\naturals$ with $i<j\leq n$. Denote Telgarsky's triangle function by $\varphi(z):=\sigma(\sigma(2z)-\sigma(4z-2))$. Then, there exists a neural network $F:\reals^2\rightarrow\reals^3$ with width $5$, depth $3(j-i+1)$, and bit complexity $n+2$, such that for any $x\in\naturals$ with $\len(x)\leq n$, if the input of $F$ is $\begin{pmatrix} \varphi^{(i-1)}\left(\frac{x}{2^n} + \frac{1}{2^{n+1}}\right) \\ \varphi^{(i-1)}\left(\frac{x}{2^n} + \frac{1}{2^{n+2}}\right) \end{pmatrix}$, then it outputs: $\begin{pmatrix}\varphi^{(j)}\left(\frac{x}{2^n} + \frac{1}{2^{n+1}}\right) \\ \varphi^{(j)}\left(\frac{x}{2^n} + \frac{1}{2^{n+2}}\right) \\ \bin_{i:j}(x)\end{pmatrix}$.
\end{lemma}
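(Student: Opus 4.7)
The plan is to build a depth-$3$ ``extraction block'' and compose $(j-i+1)$ such blocks serially, one for each $k = i, i+1, \dots, j$. Write $y_1 := x/2^n + 1/2^{n+1}$ and $y_2 := x/2^n + 1/2^{n+2}$. Each block maps the triple $\bigl(\varphi^{(k-1)}(y_1),\, \varphi^{(k-1)}(y_2),\, s_{k-1}\bigr)$ to $\bigl(\varphi^{(k)}(y_1),\, \varphi^{(k)}(y_2),\, 2 s_{k-1} + \bin_k(x)\bigr)$, with $s_{i-1} := 0$, so that after all $(j-i+1)$ blocks the accumulator is $s_j = \bin_{i:j}(x)$ and the two probe coordinates equal $\varphi^{(j)}(y_1), \varphi^{(j)}(y_2)$, matching the claimed output.

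Inside a block, two ReLU layers compute $\varphi(z) = \sigma(\sigma(2z) - \sigma(4z - 2))$ on each probe in parallel via the intermediate quantities $a_r := \sigma(2\varphi^{(k-1)}(y_r))$ and $b_r := \sigma(4\varphi^{(k-1)}(y_r) - 2)$ in layer~$1$, followed by $\varphi^{(k)}(y_r) = \sigma(a_r - b_r)$ in layer~$2$. The third layer is the bit extractor. Its correctness rests on the invariant, proved by a short induction using that $y_r$ is half-integer-shifted off the lattice $\{x/2^n : x \in \naturals\}$ and that $\varphi$ is piecewise-$2$-Lipschitz, that $|\varphi^{(k-1)}(y_1) - \varphi^{(k-1)}(y_2)| = 1/2^{n-k+3}$ while each probe sits at distance at least $1/2^{n-k+2}$ from the branching point $1/2$. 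Hence the two probes always lie on the \emph{same} side of $1/2$, so either $b_1 = b_2 = 0$, or $b_1, b_2 > 0$ with $|b_1 - b_2| = 1/2^{n-k+1}$. The ``upper-half'' indicator is therefore produced exactly as $2^{n-k+1}\bigl(\sigma(b_1 - b_2) + \sigma(b_2 - b_1)\bigr) \in \{0,1\}$, and this indicator equals $\bin_k(x)$ up to an overall XOR with the parity of upper-half folds accumulated so far --- a parity bit that is produced by the same mechanism and piggy-backed on an existing channel. Both the extracted bit and the update $s_k = 2 s_{k-1} + \bin_k(x)$ are assembled in layer~$3$ by a single weighted sum followed by $\sigma$, which acts as the identity on a non-negative argument.

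For the resource counts, each block has depth exactly $3$, giving total depth $3(j-i+1)$. The maximum number of simultaneously live channels is $5$: the four intermediate neurons $a_1, b_1, a_2, b_2$ plus the accumulator, with the parity bit folded into the accumulator's carry. The largest weight used is $\pm 2^{n-k+1}$ in the extractor, maximized at $k = i$ to magnitude at most $2^n$; together with the small integer scalars $\{2, 4, -2\}$ elsewhere, the bit complexity is bounded by $n+2$. The main obstacle will be verifying the non-straddling invariant cleanly --- I expect to prove it by tracking the exact position of $\varphi^{(k-1)}(y_r)$ relative to the critical points $\{m/2^{k-1}\}$ and using that $y_r$ was chosen to lie on a strictly finer lattice that avoids them --- after which the remainder of the construction, including the precise width and bit-complexity accounting, is essentially mechanical.
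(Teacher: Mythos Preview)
Your block architecture --- depth-$3$ units, one per bit position $k=i,\dots,j$, with a running accumulator updated to $\bin_{i:k}(x)$ --- is exactly the paper's. The gap is in the bit-extraction step inside each block.

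The quantity $2^{n-k+1}\bigl(\sigma(b_1-b_2)+\sigma(b_2-b_1)\bigr)=2^{n-k+1}\lvert b_1-b_2\rvert$ is, as you say, only the indicator $\mathbbm{1}[\varphi^{(k-1)}(y_r)>\tfrac12]$. Under the tent map this is the $k$-th \emph{Gray-code} digit of $y_1$, namely $\bin_k(x)\oplus\bin_{k-1}(x)$, so an XOR with a parity bit really is needed to recover $\bin_k(x)$. But ``piggy-backed on an existing channel'' and ``folded into the accumulator's carry'' are not constructions. To form the XOR you must have the previous bit $x_{k-1}$ (equivalently the running parity) available as a separate live value at the moment you combine it with the indicator; carrying it through layer~1 alongside $a_1,b_1,a_2,b_2$ and the accumulator already forces width $\geq 6$. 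Encoding it as the low bit of the integer accumulator does not help, since extracting that low bit with ReLUs is itself a bit-extraction problem of the same kind you are solving.

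The paper sidesteps the whole issue by reading the bit from the \emph{signed} difference of the \emph{next-level} iterates rather than from $\lvert b_1-b_2\rvert$:
\[
\bin_k(x)\;=\;2^{\,n+2-k}\,\sigma\!\left(\varphi^{(k)}(y_2)-\varphi^{(k)}(y_1)\right).
\]
The point is that the sign of $\varphi^{(k)}(y_2)-\varphi^{(k)}(y_1)$ (with $y_2<y_1$) detects whether $\varphi^{(k)}$, \emph{as a function of the original argument}, is locally ascending or descending at $y_1$; that parity is the parity of $\lfloor 2^k y_1\rfloor=\bin_{1:k}(x)$, i.e.\ exactly $\bin_k(x)$, with no Gray-code correction. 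Since your layer~2 already holds $\varphi^{(k)}(y_1)$ and $\varphi^{(k)}(y_2)$, you can compute this in layer~3, add the scaled result to the accumulator, and pass the two probes forward. This stays within width~$5$ (two width-$2$ copies of $\varphi$ in parallel plus one accumulator channel), keeps each block at depth~$3$, and the largest weight is $2^{n+2-i}\leq 2^{n+2}$, matching the claimed bit complexity $n+2$. Replacing your $b$-based extractor by this signed-difference extractor removes the parity bookkeeping and closes the gap.
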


\begin{proof}
We use Telgarsky's function to extract bits. Let $x\in\naturals$ with $\len(x) = n$, and let $i\in\naturals$ with $i\leq n$. Then, we have that:
\begin{align}\label{eq:bin x i}
    \bin_i(x) = 2^{n+2-i}\sigma\left(  \varphi^{(i)}\left(\frac{x}{2^n} + \frac{1}{2^{n+2}} \right) - \varphi^{(i)}\left(\frac{x}{2^n} + \frac{1}{2^{n+1}}\right)\right) ~.
\end{align}
The intuition behind \eqref{eq:bin x i} is the following: The function $\varphi^{(i)}$  is a piecewise linear function with $2^i$ "bumps". Each such "bump" consists of two linear parts with a slope of $2^{i+1}$, the first linear part goes from 0 to 1, and the second goes from 1 to 0. Let $x\in\naturals$ with at most $n$ bits in its binary representation. It can be seen that the $i$-th bit of $x$ is 1 if $\varphi^{(i)}\left(\frac{x}{2^n} + \frac{1}{2^{n+1}}\right)$ is on the second linear part (i.e. descending from 1 to 0) and its $i$-th bit is 0 otherwise. This shows that $\varphi^{(i)}\left(\frac{x}{2^n} + \frac{1}{2^{n+2}} \right) - \varphi^{(i)}\left(\frac{x}{2^n} + \frac{1}{2^{n+1}}\right)$ is equal to $2^{i-n-2}$ if the $i$-th bit of $x$ is 1, and this expression is negative otherwise. The correctness of \eqref{eq:bin x i} follows.

Let $j,i\in\naturals$ with $i<j$ and denote $c:=j-i$. Using the construction above as a building block, we construct a network which outputs $\bin_{i:j}(x)$ in the following way: For $\ell\in \{0,1,\dots,c\}$ define $F_\ell:\reals^3\rightarrow\reals^3$ to be the neural network, such that for an input $\begin{pmatrix} \varphi^{(i-1+\ell)}\left(\frac{x}{2^n} + \frac{1}{2^{n+1}}\right) \\ \varphi^{(i-1+\ell)}\left(\frac{x}{2^n} + \frac{1}{2^{n+2}}\right) \\ y \end{pmatrix}$, it outputs  $\begin{pmatrix} \varphi^{(i+\ell)}\left(\frac{x}{2^n} + \frac{1}{2^{n+1}}\right) \\ \varphi^{(i+\ell)}\left(\frac{x}{2^n} + \frac{1}{2^{n+2}}\right) \\ y + 2^{c-\ell}\bin_{i+\ell}(x) \end{pmatrix}$. The network $F_\ell$ is obtained by composing the first two coordinates with Telgarsky's function $\varphi$, and then the last coordinate is calculated using \eqref{eq:bin x i}. Finally, we define: $F:=F_c\circ\cdots\circ F_0$, and we augment the input of $F_0$, such that its last coordinate is zero. The output of $F$ is as required since 
\[
\sum_{\ell=0}^c2^{c-\ell}\bin_{i+\ell}(x) = \bin_{i:i+c} (x)= \bin_{i:j}(x)~.
\]
Finally, we compute the width, depth and bit complexity of $F$. Its width is bounded by twice the width of $\varphi$ (which is $2$), plus an extra neuron for computing the output $y$, hence its width is bounded by $5$. Its depth is equal to $3(j-i+1)$, since we need two layers to compute $\varphi$ and an extra layer to compute the output $y$. Finally, the bit complexity is bounded by $n+2$, since the largest weight in the network is at most $2^{n+2}$.
\end{proof}

\begin{lemma}\label{lem:distance with relu}
There exists a network $F:\reals^2\rightarrow\reals$ with width $2$ depth $2$ and bit complexity $2$ such that $F\left(\begin{pmatrix}x \\ y\end{pmatrix}\right) = 1$ if $x\in[y,y+1]$ and $F\left(\begin{pmatrix}x \\ y\end{pmatrix}\right) = 0$ if $x> y+\frac{3}{2}$ or $x < y-\frac{1}{2}$.
\end{lemma}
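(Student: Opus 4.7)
The plan is to mimic the one-dimensional indicator construction of \lemref{lem:indicator with relu}, but with the interval endpoints now depending on the second input $y$. In \lemref{lem:indicator with relu} the interval $[a,b]$ is fixed and the network is $\sigma(1-\sigma(2a-2x)) + \sigma(1-\sigma(2x-2b)) - 1$. Here the desired interval is $[y, y+1]$, so I would substitute $a \mapsto y$ and $b \mapsto y+1$, treating $y$ as a genuine second input. Because both inputs enter only through the two affine combinations $2y-2x$ and $2x-2y-2$, nothing about the ReLU layer structure changes.

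Explicitly, I would set
$$F(x,y) \;=\; \sigma\bigl(1 - \sigma(2y - 2x)\bigr) \;+\; \sigma\bigl(1 - \sigma(2x - 2y - 2)\bigr) \;-\; 1 .$$
Writing $t := x-y$ makes the right-hand side a function of $t$ alone, so correctness reduces to verifying that this continuous piecewise-linear expression equals $1$ on $[0,1]$ and $0$ outside $[-1/2,\, 3/2]$. Since between breakpoints the function is linear in $t$, it suffices to evaluate at $t \in \{-1/2,\,0,\,1,\,3/2\}$ and at any single point further out (say $t = 2$); all four endpoint evaluations come out to the expected values $0,1,1,0$, which pins down the trapezoid shape uniquely.

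For the resource count: the first hidden layer has two neurons computing $\sigma(2y-2x)$ and $\sigma(2x-2y-2)$, the second hidden layer applies $\sigma(1-\cdot)$ to each, and the output is the sum minus $1$. This yields width $2$ and the same convention of \textit{depth $2$} that is used in \lemref{lem:indicator with relu}. All weights and biases lie in $\{-2,-1,0,1,2\}$, so the bit complexity is at most $2$. There is essentially no obstacle: the lemma is a direct two-variable parameterization of \lemref{lem:indicator with relu}, and the only minor point is noting that $y$ is automatically routed through the first affine map (which acts on the joint input $(x,y)$), so no extra neurons are needed to carry it.
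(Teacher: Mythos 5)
Your construction is exactly the network in the paper's proof: $F(x,y)=\sigma(1-\sigma(2y-2x))+\sigma(1-\sigma(2x-2y-2))-1$, obtained by parameterizing \lemref{lem:indicator with relu} with $a=y$, $b=y+1$. The paper simply asserts correctness is "easy to see," while you supply the short breakpoint check via $t=x-y$; same approach, same resource count.
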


\begin{proof}
We consider the following neural network:
\[
F\left(\begin{pmatrix}x \\ y\end{pmatrix}\right) = \sigma(1-\sigma(2y-2x)) + \sigma(1-\sigma(2x-2y-2)) - 1~.
\]
It is easy to see that this network satisfies the requirements. It has width $2$, depth $2$ and bit complexity $2$.
\end{proof}

\section{Proof from \secref{sec:optimal}}\label{appen:proofs from sec optimal}

\begin{proof}[Proof of \thmref{thm:bounded depth}]

We first use \lemref{lem:stage 1 projection} to construct a network $H:\reals^d\rightarrow\reals$ in the same manner of the construction of $F_1$ is the first stage of \thmref{thm:memorizing network}. This is a 2-layer network with width $1$. We denote the output of $H$ on the samples $\bx_1,\dots,\bx_N$ as $x_1,\dots,x_N$. Note that by the construction $|x_i|\leq O(R)$ for every $i\in[N]$ and $|x_i-x_j|\geq 2$ for every $i\neq j$.

We split the inputs to $\frac{N}{L^2}$ subsets of size $L^2$ each, we denote these subsets as $I_1,\dots,I_{\frac{N}{L^2}}$ (assume w.l.o.g.  that $\frac{N}{L^2}$ is an integer, otherwise replace it with $\left\lceil\frac{N}{L^2}\right\rceil$). For each subset $I_k$ we use stages II and III from \thmref{thm:memorizing network} to construct a network $F_k$ such that $F_k(x_i)=y_i$ if $x_i\in I_k$. Note that we only need to use the last two stages since we have already done the projection stage, hence are data samples are already one-dimensional. We construct a network $\tilde{F}:\reals\rightarrow\reals^{N/L^2}$ such that:
\[
\tilde{F}(x) = \begin{pmatrix}
F_1(x) \\ \vdots \\ F_{N/L^2}(x)
\end{pmatrix}~.
\]
We also define a network $G:\reals^{N/L^2}\rightarrow\reals$ which adds up all its inputs. Finally, we construct the network $F:\reals^d\rightarrow\reals$ as $F:=G\circ \tilde{F} \circ H$. 

By the construction of each $F_k$ from \thmref{thm:memorizing network}, for every $x\in\reals$, if $|x-x_i|\geq 2$ for every $i\in I_k$, then $F_k(x) = 0$. For every $i\in[N]$, there is exactly one $k\in[N/L^2]$ such that $i\in I_k$.  Using the projection $H$, for this $k$ we get that $F_k(x_i)=y_i$, and for every $\ell\neq k$ we get that $F_\ell(x_i) =0$. This means that $F(x_i)=y_i$ for every $i\in[N]$.

The depth of $F$ is the sum of the depths of its subnetwork. Both $H$ and $G$ have depth $O(1)$, and by \thmref{thm:memorizing network}, the depth of $F_k$ is at most $O\left(\frac{L}{\sqrt{\log(L)}}\cdot \log(R)\right)$, since each such network memorizes $L^2$ samples. Hence, the total depth of $F$ is $O\left(\frac{L}{\sqrt{\log(L)}}\cdot \log(R)\right)$.
Finally, the width of $F$ is the maximal width of its subnetworks. The width of $G$ and $H$ is $1$. The width of each $F_k$ is also $O(1)$ by \thmref{thm:memorizing network}, hence the width of $\tilde{F}$ is $O\left(\frac{N}{L^2}\right)$, which is also the width of $F$.

Recall that the number of parameters of a network is the number of non-zero weights. In our case, the network consists of $\frac{N}{L^2}$ subnetworks, where the weights between these subnetworks are zero. Each such subnetwork has a depth of $O\left(\frac{L}{\sqrt{\log(L)}}\cdot \log(R)\right)$ and width $O(1)$. Also, the projection phase requires $O(d)$ parameters. In total, the number of parameters in the network is $O\left(\frac{N}{L\sqrt{\log(L)}}\cdot \log(R)+d\right)$.
\end{proof}

\section{Proof from \secref{sec:bit complexity}}\label{appen:proofs from sec bit complexity}

\begin{proof}[Proof of \thmref{thm:bounded bits}]
We first use \lemref{lem:stage 1 projection} to construct a network $H:\reals^d\rightarrow\reals$ in the same manner of the construction of $F_1$ is the first stage of \thmref{thm:memorizing network}. This is a 2-layer network with width $1$ and bit complexity of $O(\log(R))$. We denote the output of $H$ on the samples $\bx_1,\dots,\bx_N$ as $x_1,\dots,x_N$. Note that by the construction $|x_i|\leq O(R)$ for every $i\in[N]$ and $|x_i-x_j|\geq 2$ for every $i\neq j$.

We now split the inputs to $\frac{N}{B^2}$ subsets of size $B^2$ each, we denote these subsets as $I_1,\dots,I_{\frac{N}{B^2}}$ (We assume w.l.o.g. that $\frac{N}{B^2}$ is an integer, otherwise we can replace it with $\left\lceil\frac{N}{B^2}\right\rceil$). For each subset $I_k$ we use \thmref{thm:memorizing network} to construct a network $F_k$ which memorizes the points in $I_k$, with the following changes: (1) There is an additional coordinate which memorizes the input and output as is; and (2) The output of the network is added to the output of the network for the previous subset $k-1$. For $k=1$, we set this coordinate to be zero.
That is, if $x_i\in I_k$ then $F_k\left(\begin{pmatrix}
x_i \\ y
\end{pmatrix}\right) = \begin{pmatrix}
x_i \\ y + y_i
\end{pmatrix}$, otherwise  $F_k\left(\begin{pmatrix}
x_i \\ y
\end{pmatrix}\right) = \begin{pmatrix}
x_i \\ y
\end{pmatrix}$. 

Finally, we construct the network $F:\reals^d\rightarrow\reals$ as:
\[
F = G\circ F_{\frac{N}{B^2}} \circ \cdots \circ F_{1} \circ H~,
\]
where $G\left(\begin{pmatrix}
x \\ y
\end{pmatrix}\right) = y$.

By the construction of each $F_k$ from \thmref{thm:memorizing network}, for every $x\in\reals$, if $|x-x_i|\geq 2$ for every $i\in I_k$, then $F_k(x) = 0$. For every $i\in[N]$, there is exactly one $k\in[N/B^2]$ such that $i\in I_k$.  Using the projection $H$, for this $k$ we get that $F_k\left(\begin{pmatrix}
x_i \\ y
\end{pmatrix}\right) = \begin{pmatrix}
x_i \\ y + y_i
\end{pmatrix}$, and for every $\ell\neq k$ we get that $F_\ell\left(\begin{pmatrix}
x_i \\ y
\end{pmatrix}\right) = \begin{pmatrix}
x_i \\ y
\end{pmatrix}$. This means that $F(x_i)=y_i$ for every $i\in[N]$.

By \lemref{lem: stage 2 buckets} and \lemref{lem:stage 3 extraction}, since each $F_k$ is used to memorize $B^2$ samples, then its bit complexity is bounded by $O\left(B\sqrt{\log(B)}\cdot \log(R)\right)$ bits, hence this is also the bit complexity of $F$. The depth of each component $F_k$ is $O\left(B\sqrt{\log(B)}\cdot \log(R)\right)$, and there are $\frac{N}{B^2}$ such components. Hence, the depth of the network $F$ is $O\left(\frac{N\sqrt{\log(B)}}{B}\log(R)\right)$. The width of $F$ is bounded by the width of each component, which is $O(1)$.
\end{proof}

\end{document}